\def\isarxiv{1}

\ifdefined\isarxiv
\documentclass[11pt]{article}
\else
\documentclass{article} 
\usepackage{microtype} 
\usepackage{graphicx}
\usepackage{subfig}
\usepackage{hyperref}
\usepackage{icml2024}
\fi

\usepackage[utf8]{inputenc} %
\usepackage[T1]{fontenc}    %
\usepackage{url}            %
\usepackage{booktabs}       %
\usepackage{amsfonts}       %
\usepackage{nicefrac}       %
\usepackage{microtype}      %

\usepackage{amsmath}
\usepackage{amsthm}
\usepackage{amssymb}
\usepackage{algorithm}
\usepackage{color}
\usepackage[english]{babel}
\usepackage{grffile}
\usepackage{wrapfig,epsfig}
\usepackage{epstopdf}
\usepackage{url}
\usepackage{color}
\usepackage{epstopdf}
\usepackage{algpseudocode}
\usepackage[T1]{fontenc}
\usepackage{bbm}
\usepackage{comment}
\usepackage{caption}
\usepackage{dsfont}
\usepackage{tikz}

\usepackage{tikz}
\usetikzlibrary{arrows}
\ifdefined\isarxiv
\usepackage[margin=1in]{geometry}
\usepackage{hyperref}
\definecolor{mydarkblue}{rgb}{0,0.08,0.45}
\hypersetup{colorlinks=true, citecolor=mydarkblue,linkcolor=mydarkblue}
\else 
\usepackage{hyperref}
\definecolor{mydarkblue}{rgb}{0,0.08,0.45}
\hypersetup{colorlinks=true, citecolor=mydarkblue,linkcolor=mydarkblue}
\fi
\graphicspath{{./figs/}}
\usepackage{mathtools}

\theoremstyle{plain}
\newtheorem{theorem}{Theorem}[section]
\newtheorem{lemma}[theorem]{Lemma}
\newtheorem{definition}[theorem]{Definition}

\newtheorem{remark}[theorem]{Remark}

\newtheorem{problem}[theorem]{Problem}

\newcommand{\wt}{\widetilde}

\newcommand{\N}{\mathcal{N}}
\newcommand{\R}{\mathbb{R}}

\renewcommand{\varepsilon}{\epsilon}
\renewcommand{\tilde}{\wt}

\renewcommand{\bar}{\overline}

\DeclareMathOperator{\OPT}{\mathsf{opt}}
\DeclareMathOperator{\supp}{supp}

\DeclareMathOperator{\poly}{poly}

\DeclareMathOperator{\dist}{dist}

\DeclareMathOperator{\vect}{vec}
\DeclareMathOperator{\tr}{tr}

\newcommand{\M}{{\bf{M}}}
\newcommand{\W}{{\bf{W}}}
\newcommand{\U}{{\bf{U}}}

\newcommand{\pub}{{\mathsf{pub}}}
\newcommand{\priv}{{\mathsf{priv}}}
\newcommand{\X}{\mathbf{X}}
\newcommand{\Y}{\mathbf{Y}}
\newcommand{\calD}{\mathcal{D}}
\newcommand{\calN}{\mathcal{N}}
\renewcommand{\dist}{\mathrm{dist}}
\newcommand{\Id}{\mathsf{Id}}
\newcommand{\ETH}{$\mathsf{ETH}$}
\newcommand{\NP}{$\mathsf{NP}$}
\newcommand{\MAXCUT}{$\mathsf{MAX}$-$\mathsf{CUT}$}
\newcommand{\SAT}{$\mathsf{3SAT}$}

\newcommand{\instahide}{$\mathsf{InstaHide}$}

\definecolor{b2}{RGB}{51,153,255}
\definecolor{mygreen}{RGB}{80,180,0}
\definecolor{mypink}{rgb}{0.858, 0.188, 0.478}

\makeatletter
\newcommand*{\RN}[1]{\expandafter\@slowromancap\romannumeral #1@}
\makeatother

\usepackage{mdframed}

\newmdtheoremenv{prob}{Problem}

\ifdefined\isarxiv

\else

\usepackage[textsize=tiny]{todonotes}

\icmltitlerunning{InstaHide's Sample Complexity When Mixing Two Private Images}

\fi

\begin{document}

\ifdefined\isarxiv

\title{InstaHide's Sample Complexity When Mixing Two Private Images}
\author{
Baihe Huang\thanks{\texttt{baihe\_huang@berkeley.edu}. University of California, Berkeley.}
\and
Zhao Song\thanks{\texttt{zsong@adobe.com}. Adobe Research.}
\and
Runzhou Tao\thanks{\texttt{runzhou.tao@columbia.edu}. Columbia University.}
\and
Junze Yin\thanks{\texttt{junze@bu.edu}. Boston University.}
\and 
Ruizhe Zhang\thanks{\texttt{rzzhang@berkeley.edu}. Simons Institute for the Theory of Computing.}
\and
Danyang Zhuo\thanks{\texttt{danyang@cs.duke.edu}. Duke University.}
}
\date{}

\else

\twocolumn[
\icmltitle{InstaHide's Sample Complexity When Mixing Two Private Images}

\icmlsetsymbol{equal}{*}

\begin{icmlauthorlist}
\icmlauthor{Firstname1 Lastname1}{equal,yyy}
\icmlauthor{Firstname2 Lastname2}{equal,yyy,comp}
\icmlauthor{Firstname3 Lastname3}{comp}
\icmlauthor{Firstname4 Lastname4}{sch}
\icmlauthor{Firstname5 Lastname5}{yyy}
\icmlauthor{Firstname6 Lastname6}{sch,yyy,comp}
\icmlauthor{Firstname7 Lastname7}{comp}
\icmlauthor{Firstname8 Lastname8}{sch}
\icmlauthor{Firstname8 Lastname8}{yyy,comp}
\end{icmlauthorlist}

\icmlaffiliation{yyy}{Department of XXX, University of YYY, Location, Country}
\icmlaffiliation{comp}{Company Name, Location, Country}
\icmlaffiliation{sch}{School of ZZZ, Institute of WWW, Location, Country}

\icmlcorrespondingauthor{Firstname1 Lastname1}{first1.last1@xxx.edu}
\icmlcorrespondingauthor{Firstname2 Lastname2}{first2.last2@www.uk}

\icmlkeywords{Machine Learning, ICML}

\vskip 0.3in
]

\printAffiliationsAndNotice{\icmlEqualContribution} %

\fi 

\ifdefined\isarxiv
  \maketitle
  \begin{abstract}
  Training neural networks usually require large numbers of sensitive training data, and how to protect the privacy of training data has thus become a critical topic in deep learning research. InstaHide is a state-of-the-art scheme to protect training data privacy with only minor effects on test accuracy, and its security has become a salient question. In this paper, we systematically study recent attacks on InstaHide and present a unified framework to understand and analyze these attacks. We find that existing attacks either do not have a provable guarantee or can only recover a single private image. On the current InstaHide challenge setup, where each InstaHide image is a mixture of two private images, we present a new algorithm to recover all the private images with a provable guarantee and optimal sample complexity. In addition, we also provide a computational hardness result on retrieving all InstaHide images. Our results demonstrate that InstaHide is not information-theoretically secure but computationally secure in the worst case, even when mixing two private images.

  \end{abstract}

\else
\begin{abstract}

\end{abstract}
\fi

\section{Introduction}
Collaboratively training neural networks based on sensitive data is appealing in many AI applications, such as healthcare, fraud detection, and virtual assistants. How to train neural networks without compromising data confidentiality and prediction accuracy has become an important and common research goal~\cite{ss15, rtd+18, ahw+18, mmr17, konevcny2016federated} both in academia and industry.
\cite{hsla20} recently proposed an approach called {\instahide} for image classification. The key idea is to train the model on a dataset where (1) each image is a mix of $k_{\priv}$ private images and $k_{\pub}$ public images, and (2) each pixel is independently sign-flipped after mixing. {\instahide} shows promising prediction accuracy on the MNIST~\cite{d12}, CIFAR-10~\cite{cifar100}, CIFAR-100, and ImageNet datasets~\cite{imagenet}. $\mathsf{TextHide}$ \cite{hscla20} applies {\instahide}'s idea to text datasets and achieves promising results on natural language processing tasks.

To understand the security aspect of {\instahide} in realistic deployment scenarios, {\instahide} authors present an {\instahide} challenge~\cite{challenge} that involves $n_{\priv} = 100$ private images, ImageNet dataset as the public images, $m = 5000$ sample images (each image is a combination of $k_{\priv} = 2$ private images and $k_{\pub} = 4$ public images and the sign of each pixel is randomly flipped). The challenge is to recover a private image given the set of sample images.

\cite{csz20} is a theoretical work that formulates the {\instahide} attack problem as a recovery problem. It also provides an algorithm to recover a private image, assuming each private and public image is a random Gaussian image (i.e., each pixel is an i.i.d. draw from $\calN(0,1)$). The algorithm shows that $O(n_{\priv}^{k_{\priv} - 2/(k_{\priv} + 1)})$ sample images are sufficient to recover one private image. \cite{carlini_attack} provides the first practical heuristic-based attack for the {\instahide} challenge ($k_{\priv} = 2$), and it can generate images that are visually similar to the private images in the {\instahide} challenge dataset. \cite{lxw+21} provides the first heuristic-based practical attack for the {\instahide} challenge ($k_{\priv} = 2$) when data augmentation is enabled. \cite{cstzz22} studied a sub-problem of the {\instahide} attack assuming that the Gram matrix can be accessed exactly, which can be regarded as the ideal case of the empirical attacks \cite{carlini_attack,lxw+21} where they used deep neural networks to estimate the Gram matrix. Under this assumption, \cite{cstzz22} proposed a theoretical algorithm based on tensor decomposition to recover the ``significant pixels'' of the private images. 

Although several researchers consider the {\instahide} challenge broken, the current {\instahide} challenge is itself too simple, and it is unclear whether existing attacks~\cite{carlini_attack, lxw+21} can still work when we use {\instahide} to protect a large number of private images (large $n$)~\cite{sanjeev}.
This raises an important question:
\begin{center}
   {\it What's the minimal number of {\instahide} images needed to recover a private image?}
\end{center}

This question is worth considering because it is a quantitative measure of how secure {\instahide} is. With the same formulation in \cite{csz20}, we achieve a better upper bound on the number of samples needed to recover private images when $k_{\priv} = 2$. Our new algorithm can recover all the private images using only $\Omega(n_{\priv} \log ( n_{\priv} ))$ samples.\footnote{For the worst case distribution, $\Omega(n_{\priv})$ is a trivial sample complexity lower bound.} This significantly improves the state-of-the-art theoretical results \cite{csz20} that requires $n_{\priv}^{4/3}$ samples to recover a single private image. However, our running time is exponential in the number of private images ($n_{\priv}$) and polynomial in the number of public images ($n_{\pub}$), where the running time of the algorithm in \cite{csz20} is polynomial in  $n_{\priv}$ and $n_{\pub}$. In addition, we provide a four-step framework to compare our attacks with the attacks presented in \cite{carlini_attack} and \cite{csz20}. We hope our framework can inspire more efficient attacks on {\instahide}-like approaches and can guide the design of future-generation deep learning algorithms on sensitive data.

\paragraph{Contributions.}

Our contributions can be summarized in the following ways.
\begin{itemize}
    \item We propose an algorithm that recovers all private images using only $\Omega(n_{\priv} \log ( n_{\priv} ))$ samples in the recent theoretical framework of attacking {\instahide} \cite{hsla20} when mixing two private images, improving the state-of-the-art result of \cite{csz20}.
    \item We summarize the existing methods of attacking {\instahide} in a unifying framework. By examining the functionality of each step, we identify the connection of a key step with problems in graph isomorphism. We also reveal the vulnerability of the existing method to recover all private images by showing the hardness of recovering all images.
\end{itemize}

\subsection{Our result}
\cite{csz20} formulates the {\instahide} attack problem as a recovery problem that given sample access to an oracle that can generate as many as {\instahide} images you want, there are two goals: 1) sample complexity, minimize the number {\instahide} images that are being used, 2) running time, use those {\instahide} images to recover the original images as fast as possible.    

Similar to \cite{csz20}, we consider the case where private and public data vectors are Gaussians. Let $S_{\pub}$ be the set of public images with $|S_{\pub}| = n_{\pub}$, let $S_{\priv}$ denote the set of private images with $|S_{\priv}| = n_{\priv}$. The model that produces {\instahide} image can be described as follows: 
\begin{itemize}
    \item Pick $k_{\pub}$ vectors from public data set  and $k_{\priv}$ vectors from private data set.
    \item Normalize $k_{\pub}$ vectors by $1/\sqrt{k_{\pub}}$ and normalize $k_{\priv}$ vectors by $1/\sqrt{k_{\priv}}$.
    \item Add $k_{\pub} + k_{\priv}$ vectors together to obtain a new vector, then flip each coordinate of that new vector independently with probability $1/2$. 
\end{itemize}
We state our results as follows:
\begin{theorem}[Informal version of Theorem~\ref{thm:main_formal}]\label{thm:main_informal}
    Let $k_{\priv} =2$. If there are $n_{\priv}$ private vectors and $n_{{\pub}}$ public vectors, each of which is an i.i.d. draw from $\calN(0,\Id_d)$, then as long as 
    \begin{align*}
    d = \Omega(\poly(k_{\pub}) \log (n_{{\pub}} + n_{\priv}) ),
    \end{align*}
    there is some $m = O(n_{\priv} \log n_{\priv} )$ such that, given a sample of $m$ random synthetic vectors independently generated as above, one can exactly recover all the private vectors in time \begin{align*}
   O(d m^2 + d n_{\pub}^2 + n_{\pub}^{2\omega+1} + m n_{\pub}^2 ) + d 2^{O(m)}
    \end{align*}
    with high probability.%
\end{theorem}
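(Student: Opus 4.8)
The plan is to recover the private vectors in four stages: (i) identify the public images used in each sample, (ii) detect which pairs of samples share a private image, (iii) reconstruct the incidence structure of private images across samples, and (iv) solve coordinate-wise for the private vectors while resolving the sign ambiguity introduced by the random coordinate flips. Throughout, the key observable is the coordinate-wise magnitude of each mixture: since the per-coordinate sign flips are random and unknown, a sample with clean value $w = \frac{1}{\sqrt{k_{\pub}}}\sum_{i\in T_{\pub}} p_i + \frac{1}{\sqrt 2}(s_a + s_b)$ is, from the attacker's viewpoint, the vector $|w|$.

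First I would exploit the fact that the attacker knows the public dataset $S_{\pub}$. Every coordinate $w_j$ is a centered Gaussian, and for a fixed public image $p_i$ the pair $(w_j,(p_i)_j)$ is jointly Gaussian with correlation bounded away from $0$ exactly when $p_i$ is one of the $k_{\pub}$ images in the sample. Since $\E[\,|X|\,|Y|\,]$ is strictly increasing in $|\rho|$ for a standard Gaussian pair of correlation $\rho$, the empirical absolute inner product $\frac1d\sum_j |w_j|\,|(p_i)_j|$ separates the ``used'' from the ``unused'' public images. A Gaussian concentration bound and a union bound over all $m\,n_{\pub}$ sample-image tests show that a single threshold succeeds simultaneously once $d = \Omega(\poly(k_{\pub})\log(n_{\pub}+n_{\priv}))$, which is exactly the dimension hypothesis. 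This determines $T_{\pub}^{(\ell)}$, hence the known offset $g^{(\ell)} := \frac{1}{\sqrt{k_{\pub}}}\sum_{i\in T_{\pub}^{(\ell)}} p_i$, for every sample.

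Next I would build the private overlap graph, viewing private images as vertices and samples as edges, so that two samples share a private image precisely when their edges are incident. I would detect incidence from the pairwise absolute inner products $\frac1d\sum_j |w^{(\ell)}_j|\,|w^{(\ell')}_j|$, correcting for the predictable public contribution: the correlation $\rho_{\ell\ell'}$ equals $\tfrac12(t/k_{\pub} + s/2)$ where $t$ is the (now-known) number of shared public images and $s$ the number of shared private images, so the measured value determines $s\in\{0,1,2\}$. Sharing one private image contributes a fixed increment of $\tfrac14$ to $\rho_{\ell\ell'}$, well separated from the no-private-overlap case, and the same concentration-plus-union-bound argument over the $\binom{m}{2}$ pairs certifies all tests under the same dimension bound. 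The output is the line graph $L(G)$ of the private-image graph $G$; by Whitney's theorem its preimage is essentially unique for connected graphs, so I can read off, for each private image, the set of samples containing it. Here the choice $m = \Theta(n_{\priv}\log n_{\priv})$ is essential: when the $m$ pairs are random, this is precisely the coupon-collector/connectivity threshold at which every vertex is covered and $G$ is connected with high probability, which makes every private image recoverable and pins down the sample complexity (the footnote's trivial $\Omega(n_{\priv})$ being strictly weaker than coupon collector's $\Omega(n_{\priv}\log n_{\priv})$ for coverage).

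Finally I would recover the vectors coordinate by coordinate. For fixed $j$, each sample yields $(s_{a_\ell})_j + (s_{b_\ell})_j = \sqrt2\,(\pm|w^{(\ell)}_j| - g^{(\ell)}_j)$, where the single sign bit per sample absorbs both the lost sign of the absolute value and the ambiguity of subtracting the public offset. Brute-forcing over the $2^m$ sign patterns and keeping the one under which the resulting linear system on $\{(s_i)_j\}_{i}$ is consistent recovers all private coordinates up to a global sign $\epsilon_j\in\{\pm1\}$; the many (odd) cycles in the dense random graph $G$ force the consistent pattern to be unique up to this symmetry. Aggregated over coordinates, this recovers all private vectors up to a single common pattern $\epsilon\in\{\pm1\}^d$ applied to every image, which is the inherent and unavoidable symmetry of the sign-flip model. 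Repeating over all $d$ coordinates costs $d\,2^{O(m)}$, the dominant term, while $dm^2 + d n_{\pub}^2 + n_{\pub}^{2\omega+1} + m n_{\pub}^2$ accounts for the inner-product and linear-algebra computations in stages (i)--(iii). I expect the main obstacle to be stage (ii): cleanly separating the private-overlap signal from the public-overlap signal in the absolute-value correlations, and proving the separation holds uniformly over all pairs so that no false incidence corrupts the graph reconstruction. This is the analysis that forces the $\poly(k_{\pub})\log$ dimension requirement and that must be handled most carefully.
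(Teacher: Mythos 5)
Your proposal follows essentially the same four-step pipeline as the paper: recover the (folded-Gaussian) Gram matrix, strip out the known public contribution, reconstruct $\W_{\priv}$ from $\M_{\priv}=\W_{\priv}\W_{\priv}^\top$ by inverting the line graph via Whitney's theorem with $m=\Theta(n_{\priv}\log n_{\priv})$ chosen as the Erd\H{o}s--R\'enyi connectivity threshold, and finally enumerate all $2^m$ sign patterns and solve a least-squares problem per coordinate. The only deviation is in Steps 1--2, where the paper invokes the $\textsc{GramExtract}$/$\textsc{LearnPublic}$ subroutines of \cite{csz20} (folded-Gaussian covariance estimation plus a sparse-phase-retrieval SDP) while you substitute direct first-absolute-moment correlation tests $\frac1d\sum_j |w_j|\,|(p_i)_j|$; this is a sound and arguably more elementary alternative whose $\Theta(1/k_{\pub})$ signal gap is covered by the same $d=\Omega(\poly(k_{\pub})\log(n_{\pub}+n_{\priv}))$ hypothesis and whose cost fits within the stated running time.
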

\subsection{Related work}

\paragraph{Privacy Preservation for Machine Learning.}

Privacy preservation is an important research area in machine learning. We need to train models on sensitive data, such as healthcare and finance. In these domains, protecting the privacy of training data becomes critical. One typical approach is to use differential privacy  \cite{dmns06, cms11, acg+16}. Specifically, in \cite{dmns06}, privacy is proved to be preserved through calibrating the standard deviation of the noise based on the sensitivity of the general function $f$. To protect privacy, the true answer, namely the result of $f$ on the database, is perturbed by adding random noise and then returned to the user. After that, \cite{cms11} applies this perturbation from \cite{dmns06} to empirical risk minimization. A new method, objective perturbation, is developed to preserve the privacy of the design of machine learning algorithms, where the objective function is perturbed before optimizing over classifiers. Under the differential privacy framework, \cite{acg+16} provides a refined analysis of the privacy costs and proposes a new algorithm for learning. However, using differential privacy in deep learning typically leads to substantially reduced model utility. \cite{ss15} developed a collaborative learning framework that enables training across multiple parties without exposing private data. Nevertheless, it requires extensive multi-party computation. Generative adversarial networks have recently been explored for privacy as well. \cite{xlw+18} proposed a differentially private Generative Adversarial Network (DPGAN) model. They provide adversarial training to prevent membership inference attacks. Privacy guarantees are supported by both their theoretical works and empirical evidence.

Federated learning is also an approach to preserve the privacy of machine learning. It distributes the training data and performs model aggregation via periodic parameter exchanges \cite{mmr17}. Recent works focus on three distinct aspects: communication expenses \cite{iru+19}, variations in data \cite{ag20}, and client resilience \cite{gkn17}.

\paragraph{InstaHide.} {\instahide} \cite{hsla20} proposes a different approach to achieve the preservation of privacy. The key idea is to encode the training data and perform machine learning training directly on the encoded data. InstaHide demonstrates that for images, one such encoding scheme is to mix together multiple images and randomly flip pixel signs. This encoding only has a minor effect on the utility of the model. $\mathsf{TextHide}$ \cite{hscla20} extends this idea to natural language processing tasks.
The remaining question is how secure {\instahide} is, and this has become a heavily discussed topic. \cite{csz20} first formulates the theoretical sample complexity problem for {\instahide}. \cite{carlini_attack} presents the first empirical attack on the {\instahide} challenge. 
\cite{hgssl21} evaluates {\instahide} for gradient inversion attacks in the federated learning setting.
\cite{lxw+21} provides a practical attack for the {\instahide} challenge ($k_{\priv} = 2$) when there is data augmentation using a fusion-denoising network. \cite{xh21} provides a reconstruction attack for $\mathsf{TextHide}$. %

\paragraph{Line Graph Reconstruction.} In combinatorics and graph theory, there are many works \cite{rou73, l74, lov77, prt81,s82,w92,lt93,ds95,jkl97,z97,ltv15} on reconstructing graphs (or hypergraphs) from their \emph{line graphs}, which turns out to be equivalent to the third step in our framework: assigning encoded images to original images. Each vertex in the line graph corresponds to a synthetic image, and two images are connected if the sets of private images that give rise to them overlap. When $k_{\priv}=2$ (the graph case), Whitney’s isomorphism theorem \cite{w92} characterizes which graph can be uniquely identified by its line graph, and many efficient algorithms have been proposed \cite{rou73, l74, s82,ds95,ltv15}. When $k_{\priv}>2$ (the hypergraph case), line graph reconstruction turns out to be {\NP}-hard \cite{lov77,prt81,lt93,jkl97}.    %
\paragraph{Organizations.}

In Section~\ref{sec:preli} we formulate our attack problem. In Section~\ref{sec:recover-alg} we present our algorithm and the main results. In Section~\ref{sec:conclusion} we conclude our paper and discuss future directions.

\paragraph{Notations.}
For any positive integer $n$, we use $[n]$ to denote the set $\{1,2,\cdots,n\}$.  For a set $S$, we use $\supp(S)$ to denote the support of $S$, i.e., the indices of its elements. We also use $\supp(w)$ to denote the support of vector $w\in \R^n$, i.e. the indices of its non-zero coordinates. For a vector $x$, we use $\| x \|_2$ to denotes entry-wise $\ell_2$ norm. For two vectors $a$ and $b$, we use $a \circ b$ to denote a vector where $i$-th entry is $a_i b_i$. For a vector $a$, we use $|a|$ to denote a vector where the $i$-th entry is $|a_i|$. Given a vector $v \in \R^n$ and a subset $S \subset [n]$ we use $[v]_S \in \R^{|S|}$ to denote the restriction of $v$ to the coordinates indexed by $S$.

\section{Preliminaries}
\label{sec:preli}
We use the same setup as \cite{csz20}, which is stated below.

\begin{definition}[Image matrix notation, Definition 2.2 in \cite{csz20}]
Let the image matrix $\X \in \R^{d \times n}$ be a matrix whose columns consist of vectors $x_1,\dots, x_n \in \R^d$ corresponding to $n$ images, each with $d$ pixels taking values in $\mathbb{R}$. It will also be convenient to refer to the rows of $\X$ as $p_1,\dots, p_d \in \R^n$.
\label{def:data}
\end{definition}

We define the public set, private set, and synthetic images following the setup in \cite{hsla20}.
\begin{definition}[Public/private notation, Definition 2.3 in \cite{csz20}]
We will refer to $S_{\pub} \subset [n]$ and $S_{\priv} = [n] \backslash S_{\pub}$ as the set of public and private images respectively, and given a vector $w \in \R^n$, we will refer to $\supp(w) \cap S_{\pub}$ and $\supp(w) \cap S_{\priv}$ as the public and private coordinates of $w$ respectively.
\end{definition}
\begin{definition}[Synthetic images, Definition 2.4 in \cite{csz20}]
Given sparsity levels 
\begin{align*} 
k_{\pub} \leq | S_{\pub} |, k_{\priv} \leq |S_{\priv}|,
\end{align*}
image matrix $\X \in \R^{d \times n}$ and a selection vector $w \in \R^n$ for which $[w]_{S_{\pub}}$ and $[w]_{S_{\priv}}$ are $k_{\pub}$- and $k_{\priv}$-sparse respectively, the corresponding synthetic image is the vector $y^{\X,w} = |\X w| \in \R^d$ where $|\cdot|$ denotes entrywise absolute value. We say that $\X \in \R^{d \times n}$ and a sequence of selection vectors $w_1, \dots, w_m \in \R^n$ give rise to a synthetic dataset $\Y \in \R^{m \times d}$ consisting of the images 
\begin{align*} 
(y^{\X,w_1} ,\dots, y^{\X,w_m})^\top.
\end{align*}
\end{definition}

We consider a Gaussian image, which is a common setting in phase retrieval \cite{csv13,njs13,cls15}.
\begin{definition}[Gaussian images, Definition 2.5 in \cite{csz20}]
We say that $\X$ is a random \emph{Gaussian image matrix} if its entries are sampled i.i.d. from $\calN(0,1)$.
\end{definition}

Distribution over selection vectors follows from variants of Mixup \cite{zcdlp17}. Here we $\ell_2$ normalize all vectors for convenience of analysis. Since $k_\priv$ is a small constant, our analysis can be easily generalized to other normalizations.
\begin{definition}[Distribution over selection vectors, Definition 2.6 in \cite{csz20}]
Let $\calD$ be the distribution over selection vectors defined as follows. To sample once from $\calD$, draw random subset $T_1\subset S_{\pub} , T_2\subseteq S_{\priv}$ of size $k_{\pub}$ and $k_{\priv}$ and output the unit vector whose $i$-th entry is $1 / \sqrt{k_{\pub}} $ if $i \in T_1$, $1 / \sqrt{k_{\priv}}$ if $i \in T_2$, and zero otherwise.\footnote{Note that any such vector does not specify a convex combination, but this choice of normalization is just to make some of the analysis later on somewhat cleaner, and our results would still hold if we chose the vectors in the support of $\calD$ to have entries summing to 1.}
\end{definition}

For convenience we define $\pub$ and $\priv$ operators below,
\begin{definition}[Public/private operators]
We define function $\pub(\cdot)$ and $\priv(\cdot)$ such that for vector $w \in \R^n$, $\pub(w) \in  \R^{n_\pub}$ will be the vector which only contains the coordinates of $w$ corresponding to the public subset $S_\pub$, and $\priv(w) \in  \R^{n_\priv}$ will be the vector which only contains the coordinates of $w$ corresponding to the private subset $S_\priv$. 
\end{definition}

For subset $\Tilde{S} \subset S$ we will refer to $\vect(\Tilde{S}) \in \R^n$ as the vector that
\begin{itemize} 
\item $\vect(\Tilde{S})_i = 1$ if $i \in \Tilde{S}$
\item and $\vect(\Tilde{S})_i = 0$ otherwise. 
\end{itemize}

We define the public and private components of $\W$ and $\Y$ for convenience.
\begin{definition}[Public and private components of image matrix and selection vectors]
For a sequence of selection vectors $w_1, \dots, w_m \in \R^n$ we will refer to 
\begin{align*}
\W = 
(w_1, \dots, w_m)^\top \in \R^{m \times n}
\end{align*}
as the mixup matrix. 

Specifically, we refer to $ \W_{\pub}  \in \{0,1\}^{m \times n_{\pub}}$ as the public component of mixup matrix and $\W_{\priv} \in \{0,1\}^{m \times n_{\priv}}$
as the private component of mixup matrix, i.e.,
\begin{align*}
    \W_{\pub} = \sqrt{k_\pub} \cdot  \begin{bmatrix}\pub(\W_{1,*}) \\ \vdots \\ \pub(\W_{m,*}) \end{bmatrix} \in \{0,1\}^{m \times n_{\pub}}, 
\end{align*} 
and
\begin{align*}
    \W_{\priv} = \sqrt{k_\priv} \cdot  \begin{bmatrix} \priv(\W_{1,*}) \\ \vdots \\ \priv(\W_{m,*}) \end{bmatrix}\in \{0,1\}^{m \times n_{\priv}}.
\end{align*}

We refer to $\X_{\pub} \in \R^{d \times n_\pub}$ as public component of image matrix which only contains the columns of $\X \in \R^{d \times n}$ corresponding to the public subset $S_\pub$, and $\X_{\priv} \in \R^{d \times n_\priv}$ as private component of image matrix which only contains the columns of $\X \in \R^{d \times n}$ corresponding to the private subset $S_\priv$.

Furthermore we define $\Y_{\pub} \in \R^{m \times d} $ as public contribution to {\instahide} images and $\Y_{\priv} \in \R^{m \times d}$ as private contribution to {\instahide} images:
\begin{align*}
    \Y_\pub = ~ \frac{1}{\sqrt{k_\pub}} \W_\pub \X_\pub^\top, ~~~~
    \Y_\priv = ~ \frac{1}{\sqrt{k_\priv}} \W_\priv \X_\priv^\top.
\end{align*}
\end{definition}

Instead of considering only one private image recovery as \cite{csz20}, here we consider a more complicated question that requires restoring all private images. %
\ifdefined\isarxiv
\vspace{0.3em}
\fi
\begin{samepage}
\begin{prob}[Exact Private image recovery]
\ifdefined\isarxiv
\vspace{-10pt}
\fi
Let $\X \in \R^{d \times n}$ be a Gaussian image matrix. Given access to public images $\{x_s\}_{s \in S_{\pub}}$ and the synthetic dataset $(y^{\X,w_1} ,\dots, y^{\X,w_m})$, where $w_1,\dots,w_m \sim \mathcal{D}$ are unknown selection vectors, output a set of vectors $\{\wt{x}_s\}_{s\in S_{\priv}}$ for which there exists a one-to-one mapping $\phi$ from $\{\wt{x}_s\}_{s\in S_{\priv}}$ to $\{x_s\}_{s\in S_{\priv}}$ satisfying $\phi(\wt{x}_s)_j = (x_s)_j$, $\forall j \in [d]$.
\end{prob}
\end{samepage}
We note that the one-to-one mapping $\phi$ is conceptual and only used to measure the performance of the recovery. The algorithm will not be able to learn this mapping.

\begin{algorithm*}[!t]\caption{Recovering All Private Images when $k_{\priv}=2$}
\label{alg:recover_all}
\begin{algorithmic}[1]
\Procedure{RecoverAll}{$\Y$} \Comment{Theorem~\ref{thm:main_formal}, Theorem~\ref{thm:main_informal}}
    \State \Comment{{\instahide} dataset $\Y = (y^{\X,w_1} ,\dots, y^{\X,w_m})^\top \in  \R^{m \times d}$}
    \State \Comment{{\color{blue}Step 1. Retrieve Gram matrix}}
    \State $\M \leftarrow \frac{1}{k_{\priv}+k_{\pub}} \cdot   \textsc{GramExtract}(\Y,\frac{1}{2(k_{\pub}+k_{\priv})})$ \label{lin:gram} \Comment{Algorithm 1 in \cite{csz20}}
    \State \Comment{{\color{blue}Step 2. Subtract Public images from Gram matrix}}
    \For{$i \in [m]$}
        \State $ S_i \leftarrow \textsc{LearnPublic}( \{(p_j)_{S_{\pub}},y_j^{\X,w_i})\}_{j \in [d]} )$\label{lin:public-coordinates} \Comment{Algorithm 2 in \cite{csz20}}
    \EndFor
    \State $\W_{\pub} \leftarrow (\pub(\vect(S_1)),\dots,\pub(\vect(S_m)))^\top$ \Comment{$\W_{\pub} \in \{0,1\}^{m \times n_{\pub}}$}
    \State $\M_{\priv} \leftarrow k_{\priv} \cdot (\M-\frac{1}{k_{\pub}}\W_{\pub}\W_{\pub}^\top)$ \label{lin:priv-Gram}%
    \State \Comment{{\color{blue}Step 3. Assign original images}}
    \State $\W_\priv \leftarrow \textsc{AssigningOriginalImages}( \M_{\priv} , n_{\priv} )$\label{lin:priv-weights} \Comment{Algorithm~\ref{alg:assign}}
    \State \Comment{{\color{blue}Step 4. Solving system of equations.}}
    \State $\Y_{\pub} = \frac{1}{\sqrt{k_{\pub}}}\W_{\pub} \X_{\pub}^\top$ \label{lin:Y-pub}\Comment{$\X_{\pub} \in \R^{d \times n_{\pub}}$, $\Y_{\pub} \in \R^{m \times d}$, $\W_{\pub} \in \{0,1\}^{m \times n_\pub}$}
    \State $\wt{X} \leftarrow \textsc{SolvingSystemofEquations}(\W_\priv,\sqrt{k_{\priv}}\Y_{\pub},\sqrt{k_{\priv}}\Y)$\label{lin:priv-images} \Comment{Algorithm~\ref{alg:solve}}
    \State \Return $\wt{X}$
\EndProcedure
\end{algorithmic}
\end{algorithm*}
\section{Recovering All Private Images When \texorpdfstring{$k_{\priv}=2$}{}}
\label{sec:recover-alg}
In this section, we prove our main algorithmic result. Our algorithm follows the high-level procedure introduced in Section~\ref{sec:comparison}. The detailed ideas are elaborated in the following subsections. We delay the proofs to Appendix~\ref{sec:main_formal_proof}.

\begin{theorem}[Main result]\label{thm:main_formal}
Let $S_{\pub} \subset [n]$, and let $n_{\pub} = | S_{\pub} |$ and $n_{\priv}= | S_{\priv} |$. Let $k_{\priv} = 2$. Let $k = k_{\priv} + k_{\pub}$. If $$d \geq \Omega \big(\poly(k) \log(n_{\pub}+
n_{\priv} ) \big)$$ and $$m \geq \Omega \big(n_{\priv} \log n_{\priv}\big),$$
then with high probability over $\X$ and the sequence of randomly chosen selection vectors $w_1,\dots,w_m \sim \mathcal{D}$, there is an algorithm which takes as input the synthetic dataset $\Y^\top = (y^{\X,w_1} ,\dots, y^{\X,w_m}) \in \R^{d \times m}$ and the columns of $\X$ indexed by $S_{\pub}$, and outputs $n_{\priv}$ images $\{\Tilde{x}_s\}_{s\in S_{\priv}}$ for which there exists one-to-one mapping $\phi$ from $\{\Tilde{x}_s\}_{s\in S_{\priv}}$ to $\{x_s\}_{s\in S_{\priv}}$ satisfying $\phi(\Tilde{x}_s)_j = (x_s)_j$ for all $j \in [d]$. Furthermore, the algorithm runs in time 
\begin{align*}
 O(m^2 d + d n_{\pub}^2 + n_{\pub}^{2\omega + 1} + m n_{\priv}^2 +   2^{m}\cdot mn_\priv^2d).
\end{align*}
\end{theorem}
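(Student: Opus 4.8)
The plan is to prove the theorem by establishing, separately, the correctness and running time of each of the four steps of Algorithm~\ref{alg:recover_all}, and then observing that conditioned on all four succeeding, the composition recovers every $|x_s|$ exactly. Steps~1 and~2 I would invoke as black boxes from \cite{csz20}: under the dimension hypothesis $d \geq \Omega(\poly(k_{\pub},k_{\priv})\log(n_{\pub}+n_{\priv}))$, the Gaussian matrix $\X$ is well-conditioned enough that \textsc{GramExtract} recovers the integer Gram matrix of the mixup exactly and \textsc{LearnPublic} recovers $\W_{\pub}$ exactly, each with high probability; subtracting the public block then yields $\M_{\priv} = \W_{\priv}\W_{\priv}^\top$ exactly. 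This reduces the theorem to two new claims: (i) \textsc{AssigningOriginalImages} recovers $\W_{\priv}$ (up to a permutation of the private indices) from $\M_{\priv}$, and (ii) \textsc{SolvingSystemofEquations} recovers every $|x_s|$ from $\W_{\priv}$ and the observations.

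The combinatorial core is Step~3. Since $k_{\priv}=2$, each row of $\W_{\priv}$ is the indicator of an unordered pair of private images, so I would model the $m$ synthetic images as the edges of a random multigraph $H$ on the vertex set $S_{\priv}$, each edge an independent uniform draw, and note that $(\M_{\priv})_{ij}\in\{0,1,2\}$ is exactly the number of endpoints shared by edges $i$ and $j$. Recovering $\W_{\priv}$ is then equivalent to labeling each edge of $H$ by its two endpoints from these pairwise intersection counts alone --- a line-graph / Whitney-type reconstruction. The key structural fact I would prove is that the family $\{Q_v\}_{v\in S_{\priv}}$, where $Q_v$ is the set of synthetic images incident to private image $v$, is exactly the set of \emph{large} maximal cliques of the graph $G$ on $[m]$ defined by $i\sim j \iff (\M_{\priv})_{ij}\geq 1$: each $Q_v$ is a clique of size $\deg_H(v)$, every edge lies in exactly two of them, and all remaining cliques arise from triangles of $H$ and hence have size at most $3$. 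I would then argue that $m = \Omega(n_{\priv}\log n_{\priv})$ forces these properties with high probability: a coupon-collector argument shows every private image is covered and in fact $\min_v \deg_H(v)\geq 4$, so every vertex-clique $Q_v$ is strictly larger than any triangle-clique and is thus identifiable; handling parallel edges (the entries $(\M_{\priv})_{ij}=2$) and the isolated $K_3$-versus-$K_{1,3}$ ambiguity separately completes the recovery up to relabeling.

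For Step~4 I would recover the pixels one coordinate at a time. Fixing a coordinate $j\in[d]$ and writing $\sigma\in\{\pm1\}^m$ for the unknown entrywise signs of the clean values at that coordinate, the relation $|\X w_i| = y^{\X,w_i}$ becomes, after subtracting the known public contribution, a linear system $\W_{\priv}\, z = \sigma \circ b$ in the $n_{\priv}$ unknown private pixel values $z$. I would guess all $2^m$ sign patterns, solve the resulting $\ell_2$ regression for each, and keep the solution with zero residual. Correctness reduces to identifiability: for the true sign pattern the system is consistent, while $m \gg n_{\priv}$ together with the genericity of the Gaussian pixel values guarantees that no incorrect sign pattern admits an exact solution and that $\W_{\priv}$ has full column rank, so the correct solution is unique. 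This recovers the signed private pixels and in particular $|x_s|$ for every $s\in S_{\priv}$.

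Finally I would add the four running times --- $O(m^2 d)$ for Step~1, $O(dn_{\pub}^2 + n_{\pub}^{2\omega+1})$ for Step~2, $O(mn_{\priv}^2)$ for the clique-based reconstruction in Step~3, and $2^{m}\cdot m n_{\priv}^2 d$ for the sign enumeration in Step~4 --- to obtain the claimed bound. I expect the \textbf{main obstacle} to be Step~3: pinning down the precise structural properties of the intersection pattern $\M_{\priv}$ that make the edge-labeling both unique and efficiently computable, and proving that $\Theta(n_{\priv}\log n_{\priv})$ samples simultaneously force full vertex coverage, minimum degree above the triangle threshold, and the absence of the small symmetric configurations that would render $H$ non-reconstructible from its line graph. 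Controlling all these low-probability bad events by a union bound over the $n_{\priv}$ vertices and the $O(m^2)$ edge pairs, while keeping the sample requirement at $n_{\priv}\log n_{\priv}$ rather than a larger polylogarithmic power, is the delicate part of the argument.
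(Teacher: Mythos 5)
Your proposal is correct and follows the paper's own four-step decomposition: Steps 1 and 2 are the same black-box invocations of \textsc{GramExtract} and \textsc{LearnPublic} from \cite{csz20}, Step 4 is the same brute-force enumeration of the $2^m$ sign patterns followed by ordinary $\ell_2$-regression, and the running-time accounting matches term by term. The one place you genuinely diverge is Step 3. The paper resolves the recovery of $\W_{\priv}$ from $\M_{\priv}=\W_{\priv}\W_{\priv}^\top$ by citation: $m=\Omega(n_{\priv}\log n_{\priv})$ puts the random graph above the Erd\H{o}s--R\'enyi connectivity threshold \cite{er60}, Whitney's isomorphism theorem \cite{w92} then says a connected graph on at least $5$ vertices (other than the $K_3$/$K_{1,3}$ pair) is determined by its line graph, and the linear-time reconstruction algorithm of \cite{ltv15} produces $G$. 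You instead give a self-contained Krausz-style argument: the stars $Q_v$ are exactly the large maximal cliques of the line graph, every vertex of the line graph lies in exactly two of them, the only other maximal cliques are triangles of size $3$, and a minimum-degree bound (which $m=\Omega(n_{\priv}\log n_{\priv})$ delivers with room to spare, since every private image has expected degree $\Theta(\log n_{\priv})$) separates the two families by size. This is essentially the combinatorial content underlying the cited recognition algorithms, so the two routes are equivalent in substance; yours has the advantage of making explicit the probabilistic events (full coverage, degree lower bounds, parallel edges) that the paper delegates to connectivity plus Whitney, at the cost of having to verify the maximal-clique classification and the efficiency of extracting the cliques yourself. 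Your added identifiability remark in Step 4 --- that Gaussian genericity rules out spurious consistent sign patterns --- is likewise something the paper asserts (``with probability $1$'') but does not argue, so it is a welcome supplement rather than a deviation.
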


Theorem~\ref{thm:main_formal} improves on \cite{csz20} in two aspects. First, we reduce the sample complexity from $n_\priv^{k_\priv-2/(k_\priv+1)}$ to $n_{\priv} \log n_{\priv}$ when $k_\priv = 2$ by formulating Step 3 (which is the bottleneck of \cite{csz20}) in Algorithm~\ref{alg:recover_all} as a combinatorial problem: line graph reconstruction (see Section~\ref{sec:assign}) which can be solved more efficiently. %

Second, we can recover all private images exactly instead of a single image as in \cite{csz20}, which is highly desirable for real-world practitioners. Furthermore, fixing all public images and multiplying any private image by $-1$ might not keep {\instahide} images unchanged. Thus, information-theoretically, we can recover all private images precisely (not only absolute values) as long as we have access to sufficient synthetic images. In fact, from the proof of Lemma~\ref{lem:solve_linear} our sample complexity suffices to achieve exact recovery. We note that if we repeatedly run \cite{csz20}'s algorithm to recover all private images, each run will require new samples, and the overall sample complexity will blow up by a factor of $n_{\priv}$.

\begin{remark}
The information-theoretic lower bound on the sample complexity of exactly recovering all private images is $\Omega(n_{\priv}\log(n_{\priv}))$ when $k_\priv=2$.%
This can be shown by a generalized coupon-collector argument that at least $n_{\priv}\log(n_{\priv})$ randomly generated synthetic images are required to contain all  $n_{\priv}$ private images with high probability. It indicates that our algorithm is essentially optimal with respect to the sample complexity.
\end{remark}

\subsection{Retrieving Gram matrix}\label{sec:retrieve}
In this section, we present the algorithm for retrieving the Gram matrix.
\begin{lemma}[Retrieve Gram matrix, \cite{csz20}]\label{lem:retrieve-Gram}
Let $n=n_{\pub} + n_{\priv}$. Suppose $d = \Omega(\log(m / \delta)/\eta^4)$. For a random Gaussian image matrix $\X \in \R^{d \times n}$ and arbitrary $w_1,\dots,w_m \in \mathbb{S}^{d-1}_{\geq 0}$, let $\Sigma^*$ be the output of
$\textsc{GramExtract}$ when we set $\eta = 1/2k$. Then with probability $1-\delta$ over the
randomness of $\X$, we have $\Sigma^* = k \cdot \W \W^{\top} \in \R^{m \times m}$. Furthermore, $\textsc{GramExtract}$ runs in time $O(m^2d)$.
\end{lemma}

We briefly describe how this is achieved. Without loss of generality, we may assume $S_{\priv}=[n]$, since once we determine the support of public images $S_\pub$, we can easily subtract their contribution. Consider a matrix $\Y \in \R^{m \times d}$ whose rows are $y^{\X, w_1},\dots, y^{\X, w_m}$. Then, it can be written as
\begin{align*}
    \mathbf{Y}=\begin{bmatrix}
    |\langle p_1, w_1\rangle| & \cdots & |\langle p_d, w_1\rangle|\\
    \vdots & \ddots & \vdots\\
    |\langle p_1, w_m\rangle| & \cdots & |\langle p_d, w_m\rangle|.
    \end{bmatrix}
\end{align*}
Since $\X$ is a Gaussian matrix, we can see that each column of $\mathbf{Y}$ is the absolute value of an independent draw of $\N(0, \W\W^\top)$. We define this distribution as $\N^{\mathsf{fold}}(0, \W\W^\top)$, and it can be proved that the covariance matrix of $\N^{\mathsf{fold}}(0, \W\W^\top)$ can be directly related $\W\W^\top$. Then, the task becomes estimating the covariance matrix of $\N^{\mathsf{fold}}(0, \W\W^\top)$ from $d$ independent samples (columns of $\mathbf{Y}$), which can be done by computing the empirical estimates.

\subsection{Remove public images}\label{sec:remove}
In this section, we present the algorithm for subtracting public images from the Gram matrix.
Formally, given any synthetic image $y^{\X,w}$ we recover the entire support of $[w]_{S_{\pub}}$ (essentially $\supp([w]_{S_{\pub}})$).

\begin{lemma}[Subtract public images from Gram matrix, \cite{csz20}]\label{lem:removepublic}
Let $n= n_{\priv} + n_{\pub}$. 
For any $\delta \geq 0$, if 
\begin{align*} 
d = \Omega( \poly(k_{\pub})/\log(n/\delta) ),
\end{align*}
then with probability
at least $1-\delta$ over the randomness of $\X$, we have that the coordinates output by \textsc{LearnPublic} are exactly equal to $\supp([w]_{S_{\pub}})$. Furthermore, \textsc{LearnPublic} runs in time $$O( d n_{\pub}^2 + n_{\pub}^{2\omega+1} ),$$ where $\omega \approx 2.373$ is the exponent of matrix multiplication \cite{w12}. 
\end{lemma}

Note that this problem is closely related to the Gaussian phase retrieval problem. However, we can only access the public subset of coordinates for any image vector $p_i$. We denote these partial vectors as $\{[p_i]_{S_\pub}\}_{i\in [d]}$. The first step is to construct a matrix $\wt{\M}\in \R^{n_{\pub}\times n_{\pub}}$:
\begin{align*}
    \wt{\M}= \frac{1}{d}\sum_{i=1}^d ((y^{\X, w}_i)^2 - 1) \cdot ([p_i]_{S_\pub} [p_i]_{S_\pub}^\top - \mathbf{I}).
\end{align*}
It can be proved that when $p_i$'s are i.i.d standard Gaussian vectors, the expectation of $\wt{\M}$ is $\M=\frac{1}{2}[w]_{S_\pub}[w]_{S_\pub}^\top$. However, when $d\ll n$, $\wt{\M}$ is not a sufficiently good spectral approximation of $\M$, which means we cannot directly use the top eigenvector of $\wt{\M}$. Instead, with high probability $[w]_{S_\pub}$ can be approximated by the top eigenvector of the solution of the following semi-definite programming (SDP):
\begin{align*}
\max_{Z\succeq 0}~ \langle \wt{\M}, Z\rangle ~s.t.~\tr[Z]=1, \sum_{i,j=1}^{n_\pub} |Z_{i,j}| \leq k_\pub.
\end{align*}
Hence, the time complexity of this step is $O(dn_\pub^2+n_\pub^{2\omega+1}),$ where the first term is the time cost for constructing $\wt{\M}$ and the second term is the time cost for SDP \cite{jklps20,hjst21}.

\subsection{Assigning encoded images to original images}\label{sec:assign}
 
We are now at the position of recovering $\W_\priv \in \R^{m \times n_{\priv}}$ from private Gram matrix $\M_\priv \in \R^{m \times m}$. Recall that $\M_\priv = \W_\priv \W_\priv^\top \in \R^{m \times m}$ where $\W_\priv \in \{0,1\}^{m \times n_{\priv}}$ is the mixup matrix with column sparsity $k_{\priv}$. By recovering mixup matrix $\W$ from private Gram matrix $\M$ the attacker maps each synthetic image $y^{\X,w_i},i\in [m]$ to two original images $x_{i_1},\dots,x_{i_{k_{\priv}}}$ (to be recovered in the next step) in the private data set, where $k_{\priv} = 2$.%

On the other hand, in order to recover the original image $x_{i}$ from the private data set, the attacker needs to know precisely the set of synthetic images $y^{\X,w_i},i\in [m]$ generated by $x_{i}$. Therefore this step is crucial to recover the original private images from {\instahide} images. We provide an algorithm and certify that it outputs the private component of the mixup matrix with sample complexity $m = \Omega(n_{\priv}\log n_{\priv})$.

As noted by \cite{csz20}, the intricacy of this step lies in the fact that a family of sets may not be uniquely identifiable from the cardinality of all pairwise intersections. This problem is formally stated in the following.
\begin{samepage}
\begin{prob}[Recover sets from cardinality of pairwise intersections]
\ifdefined\isarxiv
\vspace{-10pt}
\fi
Let $S_i \subset [n], i \in [m]$ be $n$ sets with cardinality $k$. Given access to the cardinality of pairwise intersections $|S_i \cap S_j|$ for all $i,j \in [m]$, output a family of sets $\wt{S}_i \subset [n], i \in [m]$ for which there exists a one-to-one mapping $\phi$ from $\wt{S}_i, i \in [m]$ to ${S}_i, i \in [m]$ satisfying $\phi(\wt{S}_j) = {S}_j$ for all $j \in [m]$.
\end{prob}
\end{samepage}

In real-world applications, attackers may not even have access to the precise cardinality of pairwise intersections $|S_i \cap S_j|$ for all $i,j \in [m]$ due to errors in retrieving the Gram matrix and public coordinates. Instead, attackers often face a harder version of the above problem, where they only know whether $|S_i \cap S_j|$ is an empty set for $i,j \in [m]$. However, for mixing two private images, the two problems are the same.

We now provide a solution to this problem. First, we define a concept closely related to the above problem.

\begin{definition}[Distinguishable]
For matrix $\M \in \R^{m \times m}$, we say $\M$ is distinguishable if there exists unique solution $\W = (w_1,\dots,w_m)^\top$ (up to permutation of rows) to the equation $\W \W^\top = \M$ such that $w_i \in \supp(\calD_{\priv})$ for all $i \in [m]$.
\end{definition}

\begin{lemma}[Assign {\instahide} images to the original images]\label{lem:assign-private-weights}
When $m = \Omega(n_{\priv}\log n_{\priv})$, let $\W_{\priv} = (w_1,\dots,w_m)^\top$ where $w_i,i \in [m]$ are sampled from distribution $\calD_{\priv}$ and $\M_{\priv} = \W_{\priv} \W_{\priv}^\top \in \R^{m \times m}$. Then with high probability $\M_{\priv}$ is distinguishable and algorithm \textsc{AssigningOriginalImages} inputs private Gram matrix $\M_{\priv} \in \{0,1,2\}^{m \times m}$ correctly outputs $\W_{\priv} \in \{0,1\}^{m \times n_{\priv}}$ with row sparsity $k_{\priv} = 2$ such that $ \W_{\priv} \W_{\priv}^{\top} = \M_{\priv}$. Furthermore \textsc{AssigningOriginalImages} runs in time $O(m n_\priv)$. %
\label{thm:assignoriginal}
\end{lemma}

The proof of Lemma~\ref{lem:assign-private-weights} is deferred to Appendix~\ref{sec:proof-assign-private-weights}. 
We consider graph $G = (V,E), |V| = n_{\priv} \text{ and } |E| = m$ where each $v_i \in V$ corresponds to an original image in private data set and each $e = (v_i,v_j) \in E$ correspond to an encrypted image generated from two original images corresponding to $v_i$ and $v_j$. We define the Gram matrix of graph $G = (V,E)$, denoted by  $\M_G \in \{0,1,2\}^{m \times m}$ where $m = |E|$, to be 
$
\M_G = \W \W^\top-\mathbf{I}
$
where $\W \in \{0,1\}^{m \times n_{\priv}}$ is the incidence matrix of G. That is \footnote{With high probability, $W$ will not have multi-edge. So, most entries of $M$ will be in $\{0,1\}$.} 
\begin{align*}
    {\M_G} = \begin{bmatrix}
    |e_1\cap e_1| & \cdots & |e_1\cap e_{m}|\\
    \vdots & \ddots & \vdots\\
    |e_{m}\cap e_1| & \cdots & |e_{m}\cap e_{m}|
    \end{bmatrix} \in \{0,1,2\}^{m \times m}.
\end{align*}

We can see that $\M_G$ actually corresponds to the line graph $L(G)$ of the graph $G$. We similarly call a graph $G$ distinguishable if there exists no other graph $G'$ such that $G$ and $G'$ have the same Gram matrix (up to permutations of edges), namely $\M_G = \M_{G'}$ (for some ordering of edges). To put it in another word, if we know $\M_G$, we can recover $G$ uniquely. Therefore, recovering $\W$ from $\M$ can be viewed as recovering graph $G$ from its Gram matrix $\M_G \in \R^{m \times m}$, and a graph is distinguishable if and only if its Gram matrix $\M_G$ is distinguishable.

This problem has been studied since the 1970s and fully resolved by Whitney \cite{w92}.
In fact, from a line graph $L(G)$ one can first identify a tree of the original graph $G$ and then proceed to recover the whole graph. The proof is then completed from well-known facts in random graph theory \cite{er60} that $G$ is connected with high probability when $m = \Omega(n_{\priv}\log n_{\priv})$. This paradigm can potentially be extended to handle $k \geq 3$ case with more information of $G$. Intuitively, this is achievable for a dense subgraph of $G$, such as the local structure identified by \cite{csz20}. It can also be achieved via a sparse Boolean matrix factorization by~\cite{cstzz22}. More discussion can be found in Appendix~\ref{sec:proof-assign-private-weights}.

\begin{algorithm*}[!t]\caption{Assigning Original Images}
\label{alg:assign}
\begin{algorithmic}[1]
\Procedure{AssigningOriginalImages}{$\M_{\priv}, n_{\priv}$}
    \State \Comment{$\M_{\priv} \in \R^{m \times m}$ is Private Gram matrix, $n_\priv$ is the number of private images}
    \State $\M_G\leftarrow \M_{\priv} - \mathbf{I}$
    \If{$n_\priv<5$}
        \For{$H\in \{0,1\}^{n_\priv \times n_\priv}$}
            \State $\M_H\leftarrow $ adjacency matrix of the line graph of $H$
            \If{$\M_H= \M_G$}
                \State $\wt{\W}\leftarrow \wt{\W}\cup \{\W_H\}$ \Comment{$\W_H$ is the incidence matrix of $H$}
            \EndIf
        \EndFor
        \State \Return $\wt{\W}$
    \EndIf
    \State Reconstruct $G$ from $\M_G$ \Comment{By Theorem~\ref{thm:line_graph_reconstruct}}
    \State \Return $\W$\Comment{The incidence matrix of $G$}
\EndProcedure
\end{algorithmic}
\end{algorithm*}

\subsection{Solving a large system of equations}\label{sec:solve}

In this section, we solve Step 4, recovering all private images by solving an $\ell_2$-regression problem. Formally, given the mixup coefficients $\W_{\priv}$ (for private images) and contributions to {\instahide} images from public images $\Y_{\pub}$ we recover all private images $\X_{\priv}$ (up to absolute value).

\begin{lemma} [Solve $\ell_2$-regression with hidden signs]\label{lem:solve_linear}
Given $\W_{\priv} \in \R^{ m \times n_{ \priv } } $ and $\Y_{\pub},\Y \in \R^{ m \times d }$. For each $i \in [d]$, let $\Y_{*,i} \in \R^m$ denote the $i$-th column of $\Y$ and similarily for ${\Y_{\pub}}_{*,i}$, the following $\ell_2$ regression
    \begin{align*}
        \min_{z_i \in \R^{n_{\priv}}} \| | \W_{\priv} z_i +{\Y_{\pub}}_{*,i}| - \Y_{*,i} \|_2 .
    \end{align*}
for all $i\in [d]$ can be solve by \textsc{SolvingSystemOfEquations} in time $O(2^{m}\cdot mn_\priv^2\cdot d).$ 
\end{lemma}
Our algorithm for solving the regression problem is given in Algorithm~\ref{alg:solve}, and the proof is deferred to Appendix~\ref{sec:solve_linear_proof}.

\begin{algorithm*}[!ht]\caption{Solving a large system of equations}
\label{alg:solve}
\begin{algorithmic}[1]
\Procedure{SolvingSystemOfEquations}{$\W_{\priv},\Y_{\pub},\Y$}
    \State \Comment{$\W_{\priv} \in \R^{m \times n_{\priv}}, \Y_{\pub}\in \R^{m \times d},\Y \in \R^{m \times d}$}
    \For{$i = 1 \to d$}
        \State $\wt{x}_i \leftarrow \emptyset$
        \For{$\sigma \in \{ -1,+1\}^m$}
            \State $z \leftarrow \min_{z \in \R^{n_{\priv} }} \| \W_{\priv} z  + {\Y_{\pub}}_{*,i} - \sigma \circ \Y_{*,i}  \|_2$
            \If{$\mathrm{sign}(\W_{\priv} z+ {\Y_{\pub}}_{*,i})=\sigma$}
                \State $\wt{x}_i \leftarrow \wt{x}_i \cup z$
            \EndIf
        \EndFor
    \EndFor
    \State $\wt{X} \leftarrow \{ \wt{x}_1, \cdots, \wt{x}_d  \}$
    \State \Return $\wt{X}$
\EndProcedure
\end{algorithmic}
\end{algorithm*}

\paragraph{Computational hardness result.}We also show that $\ell_2$-regression with hidden signs is in fact a very hard problem. Although empirical methods may bypass this issue by directly applying gradient descent, real-world practitioners taking shortcuts would certainly suffer from a lack of apriori theoretical guarantees when facing a large private dataset.

\begin{theorem}[Lower bound of $\ell_2$-regression with hidden signs, informal version of Theorem~\ref{thm:reg_lower_bound}.]
There exists a constant $\epsilon>0$ such that it is {\NP}-hard to $(1+\epsilon)$-approximate 
\begin{align*} 
\min_{z\in \R^n} \| |{\bf W} z| - y \|_2,
\end{align*} 
where ${\bf W} \in \{0,1\}^{m \times n}$ is row 2-sparse and $y \in \{0,1\}^m$.
\end{theorem}

We will reduce the {\MAXCUT} problem to the $\ell_2$-regression. {\MAXCUT} is a well-known {\NP}-hard problem \cite{bk99}. A {\MAXCUT} instance is a graph $G=(V,E)$ with $n$ vertices and $m$ edges. The goal is to find a subset of vertices $S \subseteq V$ such that the number of edges between $S$ and $V \backslash S$ is maximized, i.e., $\max_{S \subseteq V} | E ( S , V \backslash S ) |$. We can further assume $G$ is 3-regular, that is, each vertex has degree 3. 

The main idea of the proof is to carefully embed the graph into the matrix ${\bf W}$ so that if this $\ell_2$-regression can be solved with high accuracy, then an approximated {\MAXCUT} can be extracted from the solution vector $z$. Therefore, based on the {\NP}-hardness of approximating {\MAXCUT}, we can rule out the polynomial-time algorithm for $\ell_2$-regression with hidden signs. Furthermore, if we assume a fine-grained complexity-theoretic assumption (e.g., exponential time hypothesis (\textsf{ETH}) \cite{ip01}), then we can even rule out subexponential-time algorithm for $\ell_2$-regression with only constant accuracy. The full proof is deferred to Appendix~\ref{sec:hard}.

We note that our lower bound is for $\mathbf{W}$ in the worst case. It is an interesting open question to prove any average-case lower bound for this problem, i.e., can we still rule out a polynomial-time algorithm when ${\bf W}$ and $y$ are randomly sampled from some distributions?

\section{Conclusion and Future Directions}
\label{sec:conclusion}
We show that 
$\Omega(n_\priv \log n_\priv)$
samples suffice to recover all private images under the current setup for {\instahide} challenge of mixing two private images. 
We observe that a key step in attacking can be formulated as line graph reconstruction and prove the uniqueness and hardness of recovery. 
Our approach has significantly advanced the state-of-the-art approach~\cite{csz20} that requires 
$n_{\mathsf{priv}}^{4/3}$
samples to recover a single private image, and the sample complexity of our algorithm indicates that under the current setup, {\instahide} is not information-theoretically secure. On the other hand, our computational hardness result shows that {\instahide} is computationally secure in the worst case. In addition, we present a theoretical framework to reason about the similarities and differences of existing attacks~\cite{carlini_attack, csz20} and our attack on {\instahide}.

Based on our framework, there are several interesting directions for future study:
\begin{itemize}
    \item How to generalize our results to recover all private images when mixing more than two private images?
    \item How to extend this framework to analyze multi-task phase retrieval problems with real-world data?
    \item How to relax the Gaussian distribution assumption of the dataset ${\bf X}$ in this work and \cite{csz20}?
\end{itemize}

Real-world security is not a binary issue. We hope that our theoretical contributions shed light on the discussion of safety for distributed training algorithms and provide inspiration for the development of better practical privacy-preserving machine learning methods.

\ifdefined\isarxiv
\else

\section*{Impact Statement}
We believe that studying the security of {\instahide} can help develop more insights into private learning. Although our techniques give a new attacking algorithm for {\instahide} when $k_\priv=2$, it is theoretical and unlikely to be used by malicious actors. Rather, our theoretical findings show the strengths and weaknesses of {\instahide} from computation-theoretic and information-theoretic perspectives and pave the way to developing more secure and useful private learning schemes.  
\fi

\addcontentsline{toc}{section}{References}
\ifdefined\isarxiv
\bibliographystyle{alpha}
\else
\bibliographystyle{icml2024}%

\fi
\bibliography{ref}
\ifdefined\isarxiv
\else
\fi
\newpage
\appendix 
\onecolumn
\section*{Appendix}
\paragraph{Roadmap.} In Section~\ref{sec:comparison}, we present our unified framework for the InstaHide attacks. In Section~\ref{sec:carlini}, we apply our framework to give a systematic analysis of the attack in \cite{carlini_attack}. In Section~\ref{sec:proof-assign-private-weights}, we give the missing proof of Theorem \ref{thm:assignoriginal}. In Section~\ref{sec:solve_linear_proof}, we give the missing proof of Lemma~\ref{lem:solve_linear}. In Section~\ref{sec:main_formal_proof}, we provide the missing of Theorem~\ref{thm:main_formal}. In Section~\ref{sec:hard}, we show a computational lower bound for the attacking problem.

\section{A Unified Framework to Compare With Existing Attacks}
\label{sec:comparison}

\begin{table*}[htbp]
    \centering
    \caption{\small A summary of running times in different steps between ours and \cite{csz20}. This table only compares the theoretical results. Let $k_{\priv}$ denote the number of private images we select in {\instahide} image. Let $d$ denote the dimension of the image. Let $n_{\pub}$ denote the number of images in the public dataset. Let $n_{\priv}$ denote the number of images in the private dataset. We provide a computational lower bound for Step 4 in Appendix~\ref{sec:hard}. There is no algorithm that solves Step 4 in $2^{o(n_{\priv})}$ time under Exponential Time Hypothesis ({\ETH}) (Theorem~\ref{thm:reg_lower_bound}). Let {\bf Rec.} denote the Recover.} 
    \begin{tabular}{|l|l|l|l|l|l|l|l|} \hline
         {\bf Refs} & {\bf Rec.} & $k_{\priv}$ & {\bf Samples} & {\bf Step 1} & {\bf Step 2} & {\bf Step 3} & {\bf Step 4} \\ \hline
         Chen & one & $\geq 2$ & $m \geq n^{k_\priv-2/(k_\priv+1)} $ & $d m^2$ & $d n_{\pub}^2 + n_{\pub}^{2\omega + 1}$ & $m^2$ & $2^{k_{\priv}^2}$ \\ \hline
         Ours & all & $=2$ & $m \geq n_{\priv} \log n_{\priv}$ & $d m^2$ & $d n_{\pub}^2 + n_{\pub}^{2\omega+1} $ & $mn_{\priv}$ & $2^{m}\cdot n_\priv^2d$ \\ \hline
    \end{tabular}
    \label{tab:my_label}
\end{table*}

Our attack algorithm (Algorithm~\ref{alg:recover_all}) contains four steps for $k_{\priv} = 2$. We can prove $m = O(n_{\priv}  \log (n_{\priv}) )$ suffices for exact recovery. Our algorithm shares similarities with two recent attack results: one is a practical attack \cite{carlini_attack}, and the other is a theoretical attack \cite{csz20}. In the next few paragraphs, we describe our attack algorithm in four major steps. For each step, we also give a comparison with the corresponding step in \cite{carlini_attack} and \cite{csz20}. %

\begin{itemize}
    \item {\bf Step 1.} Section~\ref{sec:retrieve}. Recover the Gram matrix $\M  = \W \W^\top \in \R^{m \times m}$ of mixup weights $\W$ from synthetic images $\Y$. This Gram matrix contains all inner products of mixup weights $\langle w_i,w_j \rangle$. Intuitively this measures the similarity of each pair of two synthetic images and is a natural start of all existing attacking algorithms.
    \begin{itemize}
        \item For this step, \cite{carlini_attack}'s attack uses a pre-trained neural network on the public dataset to construct the Gram matrix.
        \item For this step, note that $\Y$ follows folded Gaussian distribution whose covariance matrix is directly related to $\M$. We can thus solve this step by estimating the covariance of a folded Gaussian distribution. This is achieved by using  Algorithm 2 in \cite{csz20}. It takes $O(m^2 d)$ time.
    \end{itemize}
    \item {\bf Step 2.} Section~\ref{sec:remove}. Recover all public image coefficients and subtract the contribution of public coefficients from the Gram matrix $\M$ to obtain $\M_\priv$. This step is considered as one of the main computational obstacles for private image recovery.%
    \begin{itemize}
        \item For this step, \cite{carlini_attack}'s attack: 1) they treat public images as noise, 2) they don't need to take care of the public images' labels, since current {\instahide} Challenge doesn't provide a label for public images.
        \item For this step, we invoke a paradigm in sparse phase retrieval by using a general SDP solver to approximate the principle components of the Gram matrix of public coefficients. \cite{csz20} proved that this method exactly outputs all public coefficients. The time of this step has two parts: 1) formulating the matrix, which takes $d n_{\pub}^2$, 2) solving an SDP with $n_{\pub}^2 \times n_{\pub}^2$ size matrix variable and $O(n_{\pub}^2)$ constraints, which takes $n_{\pub}^{2\omega+1}$ time \cite{jklps20,hjst21}, where $\omega$ is the exponent of matrix multiplication.
    \end{itemize}
    \item {\bf Step 3.} Section~\ref{sec:assign}. Recover private coefficients $\W_\priv \in \R^{m \times n_{\priv} }$ from private Gram matrix $\M_\priv$ ($\M_\priv = \W_\priv \W_\priv^\top$), this step takes $O(m \cdot n_{\priv}^2)$ time.
    \begin{itemize}
        \item For this step, \cite{carlini_attack}'s attack uses $K$-means to figure out cliques and then solves a min-cost max flow problem to find the correspondence between {\instahide} image and original image (see Appendix~\ref{sec:carlini} for detailed discussions).
        \item For this step \cite{csz20} starts by finding a local structure called ``floral matrix'' in the Gram matrix. They prove the existence of this local structure when $m \geq n_\priv^{k_\priv-2/(k_\priv+1)}$. Then \cite{csz20} can recover private coefficients within that local structure using nice combinatorial properties of the ``floral matrix''.
        \item For this step, we note the fact that in $k_{\priv}=2$ case the mixup matrix corresponds to the incident matrix of a graph $G$ and the Gram matrix corresponds to its line graph $L(G)$ (while $k_{\priv}\geq 3$ cases correspond to hypergraphs). We can then leverage results in graph isomorphism theory to recover all private coefficients. In particular, when $m \geq \Omega(n_\priv \log n_\priv)$ the private coefficients are uniquely identifiable from the Gram matrix.
    \end{itemize}
    \item {\bf Step 4.} Section~\ref{sec:solve}. Solve $d$ independent $\ell_2$-regression problems to find private images $X_\priv$. Given $\W_\priv \in \R^{ m \times n_{ \priv } } $ and $\Y \in \R^{ m \times d }$. For each $i \in [d]$, let $\Y_{*,i} \in \R^m$ denote the $i$-th column of $\Y$, we need to solve the following $\ell_2$ regression
    \begin{align*}
        \min_{z \in \R^{n_{\priv}}} \| | \W_\priv z + {\Y_\pub}_{*,i} | - | \Y_{*,i} | \|_2 .
    \end{align*}
    The classical $\ell_2$ regression can be solved in an efficient way in both theory and practice. However, here we don't know the random signs and we have to consider all $2^m$ possibilities. In fact, we show that solving $\ell_2$ regression with hidden signs is NP-hard.
    \begin{itemize}
        \item For this step, \cite{carlini_attack}'s attack is a heuristic algorithm that uses gradient descent. 
        \item For this step, we enumerate all possibilities of random signs to reduce it to standard $\ell_2$ regressions. \cite{csz20}'s attack is doing the exact same thing as us. However, since their goal is just recovering one private image (which means $m=O(k^2)$) they only need to guess $2^{k^2}$ possibilities.
    \end{itemize}
\end{itemize}
\section{Summary of the Attack by  \texorpdfstring{\cite{carlini_attack}}{} }
\label{sec:carlini}
This section summarizes the result of Carlini et al, which is an attack of {\instahide} when $k_{\priv} = 2$. \cite{carlini_attack}.\
We first briefly describe the current version of {\instahide} Challenge. Suppose there are $n_{\priv}$ private images, the {\instahide} authors \cite{hsla20} first choose a parameter $T$, this can be viewed as the number of iterations in the deep learning training process. For each $t \in [T]$, \cite{hsla20} draws a random permutation $\pi_t : [n_{\priv}] \rightarrow [n_{\priv}]$. Each {\instahide} image is constructed from a private image $i$, another private image $\pi_t(i)$ and also some public images. Therefore, there are $T \cdot n_{\priv}$  {\instahide} images in total. Here is a trivial observation: each private image shown up in exactly $2T$ {\instahide} images (because $k_{\priv} = 2$). The model in \cite{csz20} is a different one: each {\instahide} image is constructed from two random private images and some random public images. Thus, the observation that each private image appears exactly $2T$ does not hold. In the current version of {\instahide} Challenge, the {\instahide} authors create the {\instahide} labels (a vector that lies in $\R^L$ where the $L$ is the number of classes in image classification task) in a way that the label of an {\instahide} image is a linear combination of labels (i.e., one-hot vectors) of the private images and not the public images. This is also a major difference compared with \cite{csz20}. Note that \cite{carlini_attack} won't be confused about, for the label of an {\instahide} image, which coordinates of the label vector are from the private images and which are from the public images.

\begin{itemize}

    \item {\bf Step 1.} Recover a similarity\footnote{In \cite{carlini_attack}, they call it similarity matrix, in \cite{csz20} they call it Gram matrix. Here, we follow \cite{carlini_attack} for convenience.} matrix $\M \in \{0,1,2\}^{m \times m}$. %
    \begin{itemize}
        \item Train a deep neural network based on all the public images, and use that neural network to construct the similarity matrix $\M$.
    \end{itemize}
    \item {\bf Step 2.} Treat public image as noise.

    \item {\bf Step 3. Clustering.}
        This step is divided into 3 substeps. 
        
        The first substep uses the similarity matrix $\M$ to
        construct $T n_\priv$ clusters of {\instahide} images based on each {\instahide} image such that the 
        images inside one cluster shares a common original image.
        
        The second substep runs $K$-means on these clusters, to group clusters into $n_\priv$ groups such that
        each group corresponds to one original image.
        
        The third substep constructs a min-cost flow graph to compute the two original images that each {\instahide} image is mixed from.
    \begin{itemize}
        \item {\bf Grow clusters.} Figure~\ref{fig:cluster} illustrates an example of this step. For a subset $S$ of {\instahide} images ($S \subset [m]$), we define $\textsc{insert}(S)$ as
            \begin{align*}
                \textsc{Insert}(S) = S \cup \arg\max_{i \in [m]} \sum_{j \in S} \M_{i, j}
            \end{align*}
            For each $i \in [m]$, we compute set $S_i \subset [m]$ where $S_i = \textsc{insert}^{(T/2)}(\{i\})$.
        \item {\bf Select cluster representatives.} Figure~\ref{fig:cluster} illustrates an example of this step. Define distance between clusters as
            \begin{align*}
                \dist(i, j) = \frac{|S_i \cap S_j|}{|S_i \cup S_j|}.
            \end{align*}
            Run $k$-means using metric $\dist :[m] \times [m] \rightarrow \R$ and $k = n_\priv$. Result is $n_\priv$ groups $C_1, \ldots, C_{n_\priv} \subseteq [m]$.
            Randomly select a representative $r_i \in C_i$, for each $i \in [n_\priv]$.
        \item {\bf Computing assignments.} Construct a min-cost flow graph as Figure~\ref{fig:min_cost_flow}, with weight matrix $\wt{\W} \in \R^{m \times n_{\priv}}$ defined as follows:
            \begin{align*}
                \tilde{\W}_{i,j} = \frac{1}{|S_{r_j}|} \sum_{k \in S_{r_j}} \M_{i, k}.
            \end{align*}
            for $i \in [m], j \in [n_\priv]$. 
            After solving the min-cost flow (Figure~\ref{fig:min_cost_flow_result}), construct the assignment matrix $\W \in \R^{m \times n_{\priv}}$ such that
            $\W_{i,j} = 1$ if the edge from $i$ to $j$ has flow, and $0$ otherwise.
            
    \end{itemize}
        
    \item {\bf Step 4.} Recover original image.
        From Step 3, we have the unweighted assignment matrix $\W \in \{0,1\}^{m \times n_{\priv}}$. Before we recover the original image,
        we need to first recover the weight of mixing, which is represented by the weighted assignment matrix $\U \in \R^{m \times n_{\priv}}$. To recover weight, we first recover the label for each cluster group, and use the recovered label
        and the mixed label to recover the weight.
    \begin{itemize}
        \item First, we recover the label for each cluster, for all $i \in [n_\priv]$. Let $L$ denote the number of classes in the classification task of {\instahide} application.
        For $j \in [m]$, let $y_j \in \R^{L}$ be the label of $j$.
        \begin{align*}
            \mathrm{label}(i) = \bigcap_{j \in [m], \W_{j,i} = 1} \supp(y_j) \in [L].
        \end{align*}
        Then, for $i \in [m]$ and $j \in [n_\priv]$ such that $\W_{i,j} = 1$, define
            $\U_{i,j} = y_{i, \mathrm{label}(j)}$ for $|\supp(y_i)| = 2$ and $\U_{i,j} = y_{i, \mathrm{label}(j)} / 2$
            for $|\supp(y_i)| = 1$.
            
        Here, $\W \in \{0,1\}^{m \times n_{\priv}}$ is the unweighted assignment matrix and $\U \in \R^{m \times n_{\priv}}$ is the weighted assignment matrix.
        For $\W_{i,j} = 0$, let $\U_{i,j} = 0$.
        \item Second, for each pixel $i \in [d]$, we run gradient descent to find the original images.
        Let $\Y \in \R^{ m \times d }$ be the matrix of all {\instahide} images, $\Y_{*,i}$ denote the $i$-th column of $\Y$.\footnote{The description of the attack in \cite{carlini_attack} recovers original images by using gradient descent for $\min_{z \in \R^{n_{\priv}}} \|  \U | z | - |\Y_{*,i}| \|_2$, which we believe is a typo.}
        \begin{align*}
            \min_{z \in \R^{n_{\priv}}} \|  | \U z | - |\Y_{*,i}| \|_2 .
        \end{align*}
    \end{itemize}
\end{itemize}

\section{Missing Proofs for Theorem~\ref{thm:assignoriginal}}\label{sec:proof-assign-private-weights}
For simplicity, let $\W$ denote $\W_\priv$ and $\M$ denote $\M_\priv$ in this section.
\subsection{A graph problem (\texorpdfstring{$k_{\priv}=2$}{})}

\begin{theorem}[\cite{w92}]\label{thm:whitney_iso}
Suppose $G$ and $H$ are connected simple graphs and $L(G)\cong L(H)$. If $G$ and $H$ are not $K_3$ and $K_{1,3}$, then $G\cong H$. Furthermore, if $|V(G)|\geq 5$, then an isomorphism of $L(G)$ uniquely determines an isomorphism of $G$.
\end{theorem}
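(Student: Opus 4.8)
The plan is to recover each of $G$ and $H$ canonically from its line graph, so that an isomorphism $L(G)\cong L(H)$ can be pushed down to an isomorphism $G\cong H$. Recall that the vertices of $L(G)$ are the edges of $G$, with two adjacent exactly when the corresponding $G$-edges meet. For each $v\in V(G)$ the set $Q_v$ of $G$-edges incident to $v$ is a clique in $L(G)$; because $G$ is simple, two distinct edges share at most one vertex, so these \emph{vertex cliques} partition the edge set of $L(G)$, and every vertex of $L(G)$ (being an edge $\{a,b\}$ of $G$) lies in exactly the two cliques $Q_a,Q_b$ (allowing singleton cliques at degree-$1$ vertices). Such a clique partition in which each vertex lies in at most two parts is a Krausz partition, and recovering any Krausz partition recovers a graph with line graph $L(G)$: its vertices are the parts, its edges are the vertices of $L(G)$, and incidence is membership. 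The theorem thus reduces to showing that, outside the exceptional pair, the reconstructed graph is independent up to isomorphism of the chosen partition, and that an isomorphism $L(G)\to L(H)$ carries Krausz partitions to Krausz partitions.

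Next I would pin down which cliques a Krausz partition can use. An elementary argument shows that four pairwise-meeting edges of a simple graph share a common vertex (no fourth edge can meet all three sides of a triangle of $G$), so every clique of $L(G)$ of size $\ge 4$ equals some $Q_v$ and must appear in every Krausz partition. Hence the only freedom lies in the triangles of $L(G)$: a triangle arises either from three $G$-edges through a common vertex (a \emph{star} triangle, lying inside one part) or from a triangle of $G$ (a \emph{genuine} triangle, split across three parts). The combinatorial heart of the proof is to understand when these two readings can be interchanged without changing $L(G)$, and to show that every admissible interchange preserves the isomorphism type of the reconstructed graph.

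To organize this I would use the odd/even classification: call a triangle $T$ of $L(G)$ \emph{even} if every vertex outside $T$ is adjacent to $0$ or $2$ of its vertices, and \emph{odd} otherwise. A short case analysis, on how an external edge can meet three edges at a common vertex versus the three sides of a triangle of $G$, shows that genuine triangles are always even, while a star triangle at $v$ can be even only when $\deg(v)=2$ or $3$ and the incident vertices are confined to a tiny neighborhood. Tracing exactly when an even star triangle admits a competing genuine reading then reduces each ambiguity to a purely local reconfiguration of a handful of edges, and I would verify that each such reconfiguration replaces $G$ by an isomorphic graph. Consequently all Krausz partitions of a connected $L(G)$ yield the same $G$ up to isomorphism, with the single irreparable collision being $L(K_3)=L(K_{1,3})=K_3$: a lone triangle with no external vertices, where nothing distinguishes the star reading from the genuine one. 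This is precisely the exception in the statement.

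I expect the main obstacle to be exactly this triangle bookkeeping: proving that the odd/even test isolates all and only the ambiguous configurations, that each is resolvable up to isomorphism, and that the sole global failure is the $K_3$/$K_{1,3}$ pair. Once the canonical (up-to-isomorphism) reconstruction is established, the final clause follows by a rigidity observation. If $G$ is connected with $|V(G)|\ge 5$, then no even star triangle can survive: a degree-$3$ star triangle forces its component into at most four vertices, and a vertex of degree $\ge 4$ always supplies an external edge meeting all three sides (an odd witness). Hence the Krausz partition of $L(G)$ is literally unique, the bijection ``parts $\leftrightarrow$ vertices, $L$-vertices $\leftrightarrow$ edges'' is forced, and the given isomorphism $L(G)\to L(H)$ therefore determines one and only one isomorphism $G\cong H$.
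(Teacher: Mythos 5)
First, a point of reference: the paper does not prove Theorem~\ref{thm:whitney_iso} at all --- it is imported as a classical result of Whitney via \cite{w92} --- so there is no internal argument to compare yours against. Your sketch follows the standard textbook route: Krausz partitions of $L(G)$ into cliques with every vertex in at most two parts, the observation that any clique of $L(G)$ on at least four vertices must be the star of a single vertex of $G$ (no fourth edge can meet all three sides of a triangle), and the odd/even triangle dichotomy that reduces all ambiguity to whether a triangle of $L(G)$ is read as a star $K_{1,3}$ or as a genuine triangle $K_3$. The outline is sound, and your rigidity argument for the final clause is essentially right: if a star triangle at $v$ is even then $\deg(v)=3$ and every remaining edge joins two of $v$'s neighbors, confining the component to at most four vertices, so for connected $G$ with $|V(G)|\ge 5$ every star triangle is odd, the Krausz partition is unique, and an isomorphism of $L(G)$ descends to a unique isomorphism of $G$.

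That said, as a proof the proposal stops exactly where the work begins. You yourself identify the ``main obstacle'' --- showing that the odd/even test isolates all and only the ambiguous configurations, that each admissible interchange of a star reading for a genuine reading produces an isomorphic graph, and that the sole irreparable collision is $L(K_3)=L(K_{1,3})=K_3$ --- and then assert rather than carry out this case analysis. For graphs on at most four vertices this analysis \emph{is} the theorem, so the central claim remains unproven. Two smaller issues: a star triangle needs three edges at $v$, so the case $\deg(v)=2$ you allow cannot occur; and uniqueness of the Krausz partition also requires handling parts of size one and two (pendant edges and vertices of $L(G)$ lying in only one nontrivial clique), which your clique argument does not address. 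None of this suggests the approach would fail --- it is Whitney's own strategy --- but as written it is a plan for a proof rather than a proof.
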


In other words, this theorem claims that given $\M = \W \W^\top$, if the underlying $\W$ is not the incident matrix of $K_3$ or $K_{1,3}$, $\W$ can be uniquely identified up to permutation. Theorem~\ref{thm:whitney_iso} can also be generalized to the case when $G$ has multi-edges \cite{z97}.

On the other hand, a series of work \cite{rou73, l74, s82,ds95,ltv15} showed how to efficiently reconstruct the original graph from its line graph:
\begin{theorem}[\cite{ltv15}]\label{thm:line_graph_reconstruct}
Given a graph $L$ with $m$ vertices and $t$ edges, there exists an algorithm that runs in time $O(m+t)$ to decide whether $L$ is a line graph and output the original graph $G$. Furthermore, if $L$ is promised to be the line graph of $G$, then there exists an algorithm that outputs $G$ in time $O(m)$. 
\end{theorem}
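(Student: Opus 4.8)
The plan is to prove both claims---recognition in $O(m+t)$ and reconstruction under the promise in $O(m)$---by reducing everything to the construction of a \emph{Krausz partition}: a partition of the edge set of $L$ into cliques in which every vertex of $L$ lies in at most two of the cliques. Krausz's classical characterization states that $L$ is a line graph if and only if such a partition exists. The first step is to record the reduction from a Krausz partition back to a preimage graph $G$: create one vertex of $G$ for each clique of the partition, add an auxiliary degree-one vertex for every vertex of $L$ that happens to lie in only one clique, and turn each vertex $v$ of $L$ into the edge of $G$ joining the (at most two) cliques that contain $v$. A direct check shows $L(G) = L$ for any valid Krausz partition, and Theorem~\ref{thm:whitney_iso} guarantees that the reconstructed $G$ is the \emph{unique} preimage once $|V(G)| \ge 5$; the only genuine ambiguity, $K_3$ versus $K_{1,3}$, together with a handful of other small graphs, is dispatched by a finite lookup table.

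The second step is the heart of the argument: computing the Krausz partition. For each vertex $v$ of $L$---an edge $\{a,b\}$ of $G$---its neighborhood $N(v)$ must split into the clique of $G$-edges through $a$ and the clique of $G$-edges through $b$. The subtlety is that two neighbors $u,w$ of $v$ can be adjacent in $L$ either because they share an endpoint \emph{with each other} (an accidental triangle) or because they share the same endpoint \emph{with} $v$; only the latter should place them in the same clique at $v$. I would resolve this with the local rule of Lehot/Roussopoulos, which decides same-clique membership by testing a constant-size common-neighbor condition around the edge $vu$, handling vertices of degree at most two and the few exceptional dense small neighborhoods by direct inspection.

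The third step assembles the per-vertex clique pairs into a single globally consistent Krausz partition and then reconstructs and verifies. I would fix the two cliques at one unambiguously determined vertex and propagate clique identities by a single BFS/DFS over $L$, so that each clique receives one global label while the at-most-two-per-vertex constraint is enforced; for recognition, if at any point a vertex is forced into a third clique, or a purported clique fails to be complete, I reject $L$. To meet the time bounds I would never enumerate the internal edges of a clique: cliques are stored implicitly by their labels, so that emitting $G$ costs $O(1)$ amortized per vertex of $L$, giving the $O(m)$ bound under the promise (which is sublinear in $t$), whereas in the recognition setting the verification that each labeled clique is genuinely complete and that no forbidden configuration occurs reads each of the $t$ edges a constant number of times, giving $O(m+t)$.

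The main obstacle I anticipate is precisely the interaction between efficiency and the correctness of the local same-clique rule. Proving that the rule characterizes same-clique membership \emph{exactly} requires a careful case analysis of how triangles in $L$ can arise, and achieving the $O(m)$ promised bound---crucially sublinear in $t$, since a single star $K_{1,q}$ in $G$ yields a clique $K_q$ with $\Theta(q^2)$ edges in $L$---forces the reconstruction to pin down clique membership without ever touching most of those edges. Making the implicit-clique bookkeeping and the BFS propagation respect the two-cliques-per-vertex invariant while still running in vertex-linear (respectively input-linear) time is where the real work lies; the finite exceptional cases are routine once tabulated.
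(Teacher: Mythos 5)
The first thing to note is that the paper does not prove Theorem~\ref{thm:line_graph_reconstruct} at all: it is imported as a black box from \cite{ltv15} (together with the earlier line of work \cite{rou73,l74,s82,ds95}) and then invoked inside Algorithm~\ref{alg:assign} and the proof of Theorem~\ref{thm:assignoriginal}. So there is no in-paper argument to compare you against; what your proposal reconstructs is the approach of the cited literature, and at the level of ideas it does so faithfully: Krausz's clique-partition characterization, the Lehot/Roussopoulos local rule for splitting each neighborhood into the two cliques at an edge's endpoints, Whitney's theorem (Theorem~\ref{thm:whitney_iso}) for uniqueness of the preimage, and a finite lookup table for $K_3$ versus $K_{1,3}$ and the other small exceptions. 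As an outline, this is the right route.

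As a proof, however, it has two genuine gaps, both of which you flag but neither of which you close. First, the correctness of the local same-clique rule --- that a constant-size common-neighbor test around an edge of $L$ exactly separates ``accidental'' triangles from triangles forced by a shared endpoint in $G$ --- is the central case analysis of the Lehot and Roussopoulos papers; invoking the rule by name does not discharge it, and it is precisely where degree-two vertices and small dense neighborhoods create exceptional cases. Second, and more seriously, your mechanism for the promised $O(m)$ bound does not work as stated: you propose to propagate clique labels ``by a single BFS/DFS over $L$,'' but any traversal of $L$ visits its edges and therefore costs $\Theta(t)$, which by your own star example can be $\Theta(m^2)$; storing cliques implicitly controls the output size, not the traversal cost. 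The cited algorithm avoids this by never enumerating neighborhoods at all --- it maintains candidate endpoint labels for each vertex of $L$ and resolves them with $O(1)$ adjacency queries per vertex, exploiting the promise to skip verification --- and that is exactly the ingredient missing from your sketch. Until those two pieces are supplied, what you have is a correct plan that matches the literature the paper cites, not a proof of the theorem.
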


With Theorem~\ref{thm:whitney_iso} and Theorem~\ref{thm:line_graph_reconstruct}, Theorem~\ref{thm:assignoriginal} follows immediately:

\begin{proof}[Proof of Theorem \ref{thm:assignoriginal}]
First, since $m= \Omega(n_{\priv}\log (n_{\priv}))$, a well-known fact in random graph theory by Erd{\H{o}}s and R{\'e}nyi \cite{er60} showed that the graph $G$ with incidence matrix $\W$ will almost surely be connected. Then, we compute $\M_G=\M-\mathbf{I}$, the adjacency matrix of the line graph $L(G)$. Theorem~\ref{thm:whitney_iso} implies that $G$ can be uniquely recovered from $\M_G$ as long as $n_\priv$ is large enough. Finally, We can reconstruct $G$ from $\M_G$ by Theorem~\ref{thm:line_graph_reconstruct}. 

For the time complexity of Algorithm~\ref{alg:assign}, the reconstruction step can be done in $O(m)$ time. Since we need to output the matrix $\W$, we will take $O(mn_\priv)$-time to construct the adjacency matrix of $G$. Here, we do not count the time for reading the whole matrix $\M$ into memory.

\end{proof}

\subsection{General case (\texorpdfstring{$k_{\priv}>2$}{})}
The characterization of $\M$ and $\W$ as the line graph and incidence graph can be generalized to $k_{\priv}>2$ case, which corresponds to hypergraphs. 

Suppose $\M = \W  \W^\top$ with $k_{\priv}=k>2$. Then, $\W$ can be recognized as the incidence matrix of a $k$-uniform hypergraph $G$, i.e., each hyperedge contains $k$ vertices. $\M_G=\M-\mathbf{I}$ corresponds to adjacency matrix of the line graph of hypergraph $G$: (${\M_G})_{i,j}=|e_i \cap e_j|$ for $e_i,e_j$ being two hyperedges. Now, we can see that each entry of $\M_G$ is in $\{0,\dots,k\}$.

Unfortunately, the identification problem becomes very complicated for hypergraphs. Lov{\'a}sz \cite{lov77} stated the problem of characterizing the line graphs of 3-uniform hypergraphs and noted that Whitney's isomorphism theorem (Theorem~\ref{thm:whitney_iso}) cannot be generalized to hypergraphs. Hence, we may not be able to uniquely determine the underlying hypergraph and we should just consider a more basic problem: 
\begin{problem}[Line graph recognition for hypergraph]\label{prob:line_recog}
Given a simple graph $L=(V,E)$ and $k\in \mathbb{N}$,  decide if $L$ is the line graph of a $k$-uniform hypergraph $G$.
\end{problem}
 
Even for the recognition problem, it was proved to be NP-complete for fixed $k\geq 3$ \cite{lt93, prt81}. However, Problem~\ref{prob:line_recog} becomes tractable if we add more constraints to the underlying hypergraph $G$. First, suppose $G$ is a linear hypergraph, i.e., the intersection of two hyperedges is at most one. If we further assume the minimum degree of $G$ is at least 10, i.e., each vertex are in at least 10 hyperedges, there exists a polynomial-time algorithm for the decision problem. Similar result also holds for $k>3$ \cite{jkl97}. Let the edge-degree of a hyperedge be the number of triangles in the hypergraph containing that hyperedge. \cite{jkl97} showed that assuming the minimum edge-degree of $G$ is at least $2k^2-3k+1$, there exists a polynomial-time algorithm to decide whether $L$ is the line graph of a linear $k$-uniform hypergraph. Furthermore, in the yes case, the algorithm can also reconstruct the underlying hypergraph. We also note that without any constraint on minimum degree or edge-degree, the complexity of recognizing line graphs of $k$-uniform linear hypergraphs is still unknown.   

\begin{figure*}[!t]
\centering
\includegraphics[width=0.85\textwidth]{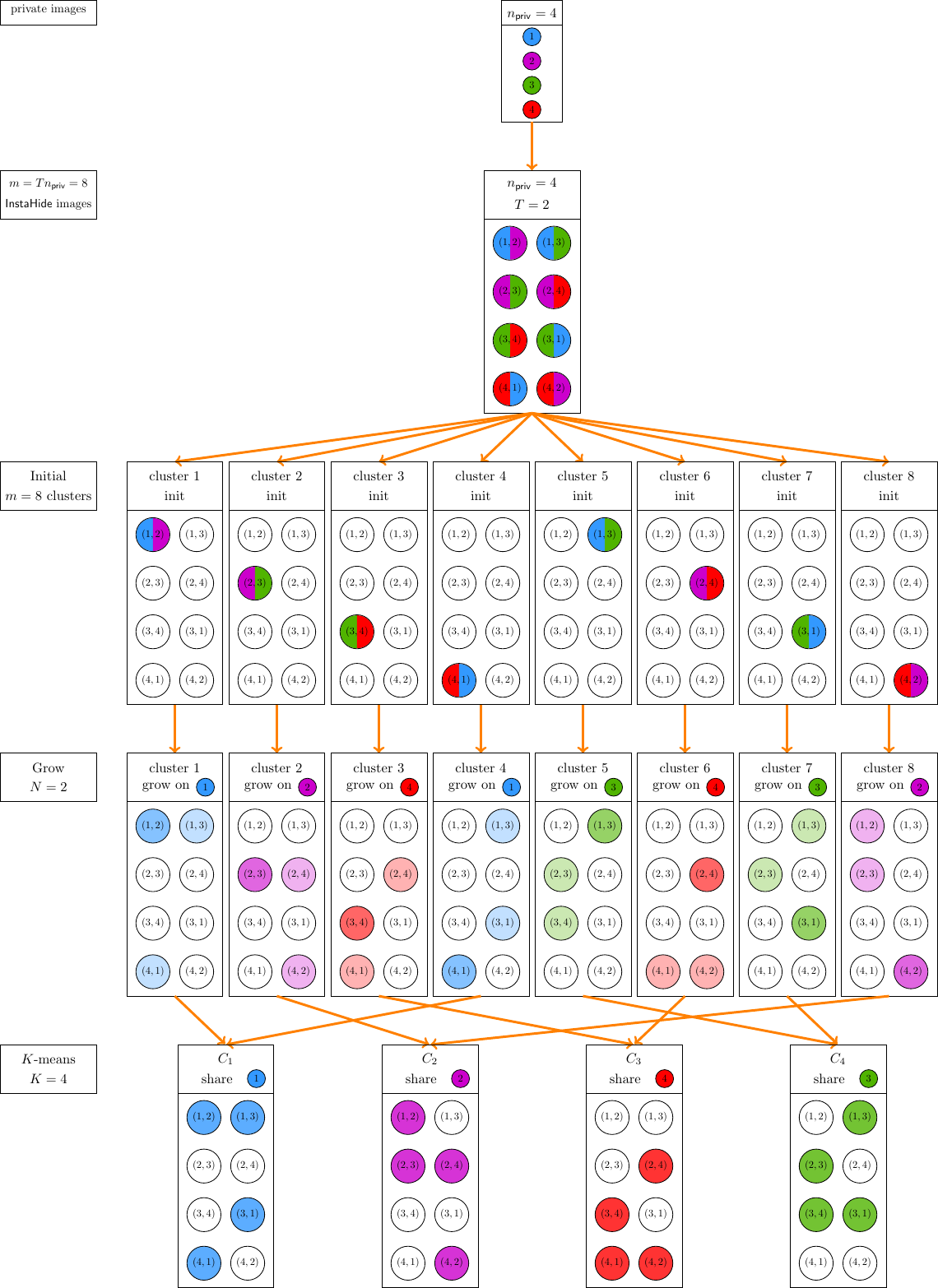}
\caption{An example about cluster step in \cite{carlini_attack} for $T = 2$ and $n_\priv = 4$. First, starting from each {\instahide} image (top), the algorithm grows cluster $S_i$ with size $3$ (middle). Then, we use $K$-means for $K = 4$ to compute 4 groups $C_1, \ldots, C_4$ (bottom), these groups each correspond to one original image.
}\label{fig:cluster}
\end{figure*}

\begin{figure*}[!t]
\centering
\includegraphics[width=0.64\textwidth]{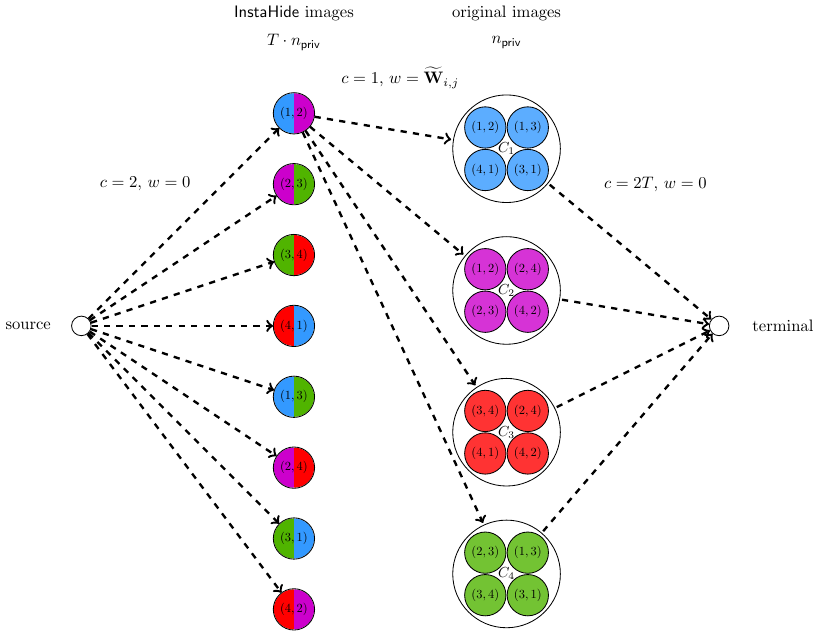}
\caption{\small The construction of the graph for min-cost max flow. $c$ denotes the flow capacity of the edge, and $w$ denote the weight of the edge. 
The graph contains $T \cdot n_{\priv} $ nodes for each {\instahide} images, $n_{\priv}$ nodes for each original images, a source and a terminal.
There are three types of edges: i) (left) from the source to each {\instahide} image node, with flow capacity $2$ and weight $0$; ii) (middle) from each {\instahide} image node $i$ to each original image node $j$, with flow capacity $1$ and weight $\wt{\W}_{i,j}$; iii) (right) from each original image node to the terminal, with flow capacity $2T$ and weight $0$.
}\label{fig:min_cost_flow}
\end{figure*}

\begin{figure*}[!t]
\centering
\includegraphics[width=0.64\textwidth]{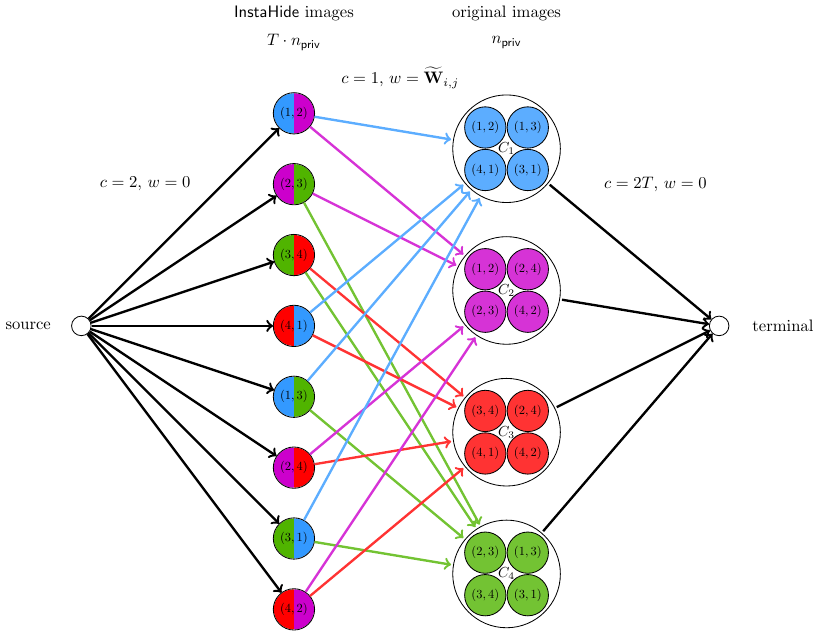}
\caption{The result of solving the min-cost flow in Figure~\ref{fig:min_cost_flow}. Each {\instahide} image is assigned to two clusters, which ideally correspond to two original images. In reality, a cluster may not contain all {\instahide} images that share the same original image.
}\label{fig:min_cost_flow_result}
\end{figure*}

\section{Missing Proof for Lemma~\ref{lem:solve_linear}}\label{sec:solve_linear_proof}
\begin{lemma} [Solve $\ell_2$-regression with hidden signs, Restatement of Lemma~\ref{lem:solve_linear}]
Given $\W_{\priv} \in \R^{ m \times n_{ \priv } } $ and $\Y_{\pub},\Y \in \R^{ m \times d }$. For each $i \in [d]$, let $\Y_{*,i} \in \R^m$ denote the $i$-th column of $\Y$ and similarily for ${\Y_{\pub}}_{*,i}$, the following $\ell_2$ regression
    \begin{align*}
        \min_{z_i \in \R^{n_{\priv}}} \| | \W_{\priv} z_i +{\Y_{\pub}}_{*,i}| - \Y_{*,i} \|_2 .
    \end{align*}
for all $i\in [d]$ can be solve by \textsc{SolvingSystemOfEquations} in time $O(2^{m}\cdot mn_\priv^2\cdot d)$. %
\end{lemma}

\begin{proof}
Suppose %
\begin{align*}
    \W_{\priv} = \begin{bmatrix}
    | & | & & |\\
    w_1 & w_2 & \cdots & w_m\\
    | & | & & |
    \end{bmatrix}^\top.
\end{align*}
and fix a coordinate $i\in [d]$.

Then, the $\ell_2$-regression we considered for the $i$-th coordinate actually minimizes 
\begin{align*}
     &~ \min_{z_i \in \R^{n_{\priv}}}\sum_{j=1}^m (|w_j^\top z_i+{\Y_{\pub}}_{j,i}|-\Y_{j,i})^2\\
    = &~ \min_{z_i \in \R^{n_{\priv}}}
    \sum_{j=1}^m (w_j^\top z_i +{\Y_{\pub}}_{j,i} - \sigma_j\cdot \Y_{j,i})^2,
\end{align*}
where $\sigma_j\in \{-1, 1\}$ is the sign of $w_j z_i^*$ for the minimizer $z_i^*$.

Therefore, in Algorithm~\ref{alg:solve}, we enumerate all possible $\sigma\in \{\pm 1\}^m$. Once $\sigma$ is fixed, the optimization problem becomes the usual $\ell_2$-regression, which can be solved in $O(n_\priv^{\omega}+mn_\priv^2)$ time. Since we assume $m=\Omega(n_\priv \log (n_\priv))$ in the previous step, the total time complexity is $$O(2^{m}\cdot mn_\priv^2).$$

If $\mathrm{sign}(\W_{\priv} z+ {\Y_{\pub}}_{*,i})=\sigma$ holds for 
\begin{align*}
    z = \min_{z \in \R^{n_{\priv} }} \| \W_{\priv} z  + {\Y_{\pub}}_{*,i} - \sigma \circ \Y_{*,i}  \|_2,
\end{align*}
then $$\sum_{j=1}^m (|w_j^\top z_i+{\Y_{\pub}}_{j,i}|-\Y_{j,i})^2 = 0$$ and $z$ is the unique minimizer of the signed $\ell_2$-regression problem almost surely.

Indeed, if we have for $\sigma \neq \wt{\sigma}$, 
$$\W_{\priv} {(X_{\priv}^{\top})}_{*,i}  + {\Y_{\pub}}_{*,i} - \sigma \circ \Y_{*,i} = 0$$ and $$\W_{\priv} \wt{z}  + {\Y_{\pub}}_{*,i} - \wt{\sigma} \circ \Y_{*,i} = 0$$ hold, then from direct calculations we come to $$\W_{\priv} \wt{z} = \sigma \circ \wt{\sigma} \circ (\W_{\priv} {(X_{\priv}^{\top})}_{*,i} + {\Y_{\pub}}_{*,i})- {\Y_{\pub}}_{*,i}.$$ 
This indicates that $$\sigma \circ \wt{\sigma} \circ (\W_{\priv} {(X_{\priv}^{\top})}_{*,i} + {\Y_{\pub}}_{*,i})- {\Y_{\pub}}_{*,i}$$ lies in a $n_\priv$-dimensional subspace of $\R^m$. Noting that ${(X_{\priv}^{\top})}_{j,i}$ and ${\Y_{\pub}}_{j,i}$ are i.i.d sampled from Gaussian, the event above happens with probability zero since $m \gg n_\priv$. Thus, we can repeat this process for all $i\in [d]$ and solve all $z_i$'s.
\end{proof}
\begin{remark}
From the above proof, we can see that Algorithm~\ref{alg:solve} also works for more general mixing methods: 
\begin{itemize}
    \item Suppose each synthetic image is generated by randomly picking 2 private images with $k_{\pub}$ public images and applying a function $\sigma_1:\R^d\rightarrow \R^d$ to the linear combination of these images. If $|\sigma_1^{-1}(x)|\leq c_1$ for all $x\in \R^d$, then Step 4 takes $O(c_1^m\cdot mn_{\priv}^2d)$ time.  
    \item Suppose the synthetic image is generated in the same way but applying a function $\sigma_2:\R\rightarrow \R$ to each coordinate of the linear combination of selected images. If $|\sigma_2^{-1}(x)|\leq c_2$ for all $x\in \R$ (in most cases $c_2\ll c_1$), then Step 4 takes $O(c_2^m\cdot mn_{\priv}^2d)$ time.
\end{itemize}

\end{remark}

\section{Missing Proof for Theorem~\ref{thm:main_formal}}\label{sec:main_formal_proof}
In this section, we provide the proof of Theorem~\ref{thm:main_formal}.
\begin{proof}
By Lemma~\ref{lem:retrieve-Gram} the matrix computed in Line~\ref{lin:gram} satisfies $\M = \W \W^\top$. By Lemma~\ref{lem:removepublic}, Line~\ref{lin:public-coordinates} correctly computes the indices of all public coordinaes of $w_i, i \in [m]$. Therefore from 
\begin{align*}
\M = & ~ \W \W^\top \\
= & ~ \W_\priv \W_\priv^\top / k_\priv + \W_\pub \W_\pub^\top / k_\pub,
\end{align*}
the Gram matrix computed in Line~\ref{lin:priv-Gram} satisfies $$\M_\priv = \W_\priv \W_\priv^\top.$$

We can now apply Lemma~\ref{lem:assign-private-weights} to find the private components of mixup weights.

Indeed, the output of Line~\ref{lin:priv-weights} is exactly
\begin{align*}
    \W_{\priv} =&~ k_\priv \cdot  \begin{bmatrix} \priv(\W_{1,*}) \\ \vdots \\ \priv(\W_{m,*}) \end{bmatrix} \in \{0,1\}^{m \times n_{\priv}}.
\end{align*}
Based on the correctness of private weights, the output in Line~\ref{lin:priv-images} is exactly all private images by Lemma~\ref{lem:solve_linear}. This completes the proof of correctness of Algorithm~\ref{alg:recover_all}.

By Lemma~\ref{lem:retrieve-Gram}, Line~\ref{lin:gram} takes in time 
\begin{align*} 
O(m^2d).
\end{align*}

By Lemma~\ref{lem:removepublic} Line~\ref{lin:public-coordinates} runs in time 
\begin{align*} 
O(dn_\pub^2+n_\pub^{2\omega+1}).
\end{align*}

By Lemma~\ref{lem:assign-private-weights} private weights can be computed in time 
\begin{align*} 
mn_\priv.
\end{align*}

Line~\ref{lin:priv-Gram} and Line~\ref{lin:Y-pub} can be computed efficiently in time 
\begin{align*} 
O(m^\omega).
\end{align*}

Finally Line~\ref{lin:priv-images} is computes in time 
\begin{align*} 
O(2^{m}\cdot mn_\priv^2\cdot d)
\end{align*}
by Lemma~\ref{lem:solve_linear}. 

Combining all these steps, the total running time of Algorithm~\ref{alg:recover_all} is bounded by $$O(m^2 d + d n_{\pub}^2 + n_{\pub}^{2\omega + 1} + m n_{\priv}^2 +   2^{m}\cdot mn_\priv^2d).$$

Thus, we complete the proof.
\end{proof}

\section{Computational Lower Bound}\label{sec:hard}
The goal of this section is to prove that the $\ell_2$-regression with hidden signs is actually a very hard problem, even for approximation (Theorem~\ref{thm:reg_lower_bound}), which implies that Algorithm~\ref{alg:solve} cannot be significantly improved. For simplicity we consider $S_{\pub} = \emptyset$. 

We first state an NP-hardness of approximation result for 3-regular {\MAXCUT}.
\begin{theorem}[Imapproximability of 3-regular {\MAXCUT}, \cite{bk99}]\label{thm:maxcut_hard}
For every $\epsilon>0$, it is {\NP}-hard to approximate 3-regular {\MAXCUT} within a factor of $r + \epsilon$, where $r \approx 0.997$.
\end{theorem}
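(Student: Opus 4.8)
The plan is to derive this inapproximability from the PCP theorem through a chain of gap-preserving gadget reductions, ending in a reduction to 3-regular \MAXCUT{} with carefully optimized constants, following the strategy of Berman and Karpinski \cite{bk99}. The reformulation I would exploit throughout is that \MAXCUT{} on a graph is exactly the problem of maximizing the number of satisfied constraints in a system of equations of the form $x_u \neq x_v$, one per edge, so the whole argument lives in the world of bounded-occurrence constraint satisfaction over $\mathbb{F}_2$.

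First I would establish a strong inapproximability result for a bounded-occurrence intermediate problem. Starting from the NP-hardness of approximating \SAT{} within some fixed constant (the PCP theorem), I would pass to systems of linear equations mod $2$ with two or three variables per equation, and then reduce the number of occurrences of each variable to a fixed constant using the expander-based variable-splitting gadget of Papadimitriou and Yannakakis: a variable appearing many times is replaced by an expander on fresh copies tied together by equality constraints, whose expansion forces almost all copies to agree in any near-optimal assignment. The output of this stage is a bounded-occurrence system of mod-$2$ equations that is NP-hard to approximate within an explicit constant bounded away from $1$.

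Second I would convert this system into a 3-regular \MAXCUT{} instance by gadget composition. Each equation becomes a constant-size \MAXCUT{} gadget built from not-equal edges, and to force every vertex to have degree exactly $3$ I would insert equality gadgets, namely short even paths or cycles of not-equal edges whose even length makes agreement the satisfying pattern; these split each variable into several degree-three copies and distribute the required incidences. Choosing the gadgets so the honest assignment cuts a prescribed fraction of edges while any cheating assignment is penalized yields a completeness $c$ and soundness $s$ that I would track explicitly through the composition, and the final ratio $r = s/c \approx 0.997$ emerges from this bookkeeping together with the constant inherited from the previous stage.

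The hard part will be the last step: squeezing the constant down to the sharp value $r \approx 0.997$ requires the gadgets to be essentially optimal, in the sense of the gadget-optimality framework of Trevisan, Sorkin, Sudan and Williamson, and requires the starting bounded-occurrence gap to be as large as possible. Tracking completeness and soundness precisely through the equality-gadget expansion argument, in particular controlling exactly how many equality edges can be left uncut in a near-optimal solution, is where the delicate quantitative work lies. Obtaining the specific numeric constant, as opposed to merely \emph{some} constant strictly below $1$, is what distinguishes this theorem from a routine bounded-degree \MAXCUT{} hardness and is the genuine obstacle.
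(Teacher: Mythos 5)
The first thing to say is that the paper contains no proof of this statement for you to be compared against: Theorem~\ref{thm:maxcut_hard} is imported as a black box from Berman and Karpinski \cite{bk99} and is only \emph{used} (in the proof of Theorem~\ref{thm:reg_lower_bound}); the paper never argues it. So the only question is whether your outline would constitute a proof on its own terms, and there it has a genuine gap, located exactly at the point you yourself flag as ``the hard part.'' The entire content of the theorem is the explicit constant $r \approx 0.997$ (in \cite{bk99} this is a ratio of the form $331/332$ for cubic \MAXCUT{}). Your pipeline --- a generic PCP-theorem gap for \SAT{}, Papadimitriou--Yannakakis expander-based variable splitting, then degree-reducing gadgets --- can only ever deliver hardness within \emph{some} unspecified constant $r' < 1$: the losses incurred by an off-the-shelf expander and by unoptimized gadgets are uncontrolled, and no after-the-fact bookkeeping recovers a specific number like $0.997$. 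A proof that ends with ``some constant bounded away from $1$'' proves a strictly weaker statement than the one you were asked to prove.

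The actual Berman--Karpinski argument differs in precisely the two places your sketch is vague, and both differences are what make the explicit constant attainable: (i) it does not start from a raw PCP gap but from H{\aa}stad's \emph{tight} inapproximability of E3-LIN-2 (distinguishing $1-\epsilon$ satisfiable from $1/2+\epsilon$ satisfiable systems of mod-2 equations), which is the largest starting gap available; and (ii) it replaces generic expanders with purpose-built constant-degree ``amplifier'' graphs whose parameters are explicitly optimized for the reduction, since generic expanders are far too lossy. Your mention of the Trevisan--Sorkin--Sudan--Williamson gadget framework points in the right direction for step (ii), but invoking it is not the same as exhibiting the amplifiers and gadgets and summing their contributions. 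As a final remark: for the paper's downstream purpose (Theorem~\ref{thm:reg_lower_bound} only needs \emph{some} constant $\epsilon > 0$), your qualitative version would actually suffice; but as a proof of the cited theorem itself, the quantitative core is missing rather than merely deferred.
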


If we assume the Exponential Time Hypothesis ({\ETH}), which a plausible assumption in theoretical computer science, we can get stronger lower bound for {\MAXCUT}. 
\begin{definition}[Exponential Time Hypothesis ({\ETH}), \cite{ip01}]
There exists a constant $\epsilon>0$ such that the time complexity of $n$-variable {\SAT} is at least $2^{\epsilon n}$.
\end{definition}

\begin{theorem}[\cite{flp16}]\label{thm:maxcut_eth_hard}
Assuming {\ETH}, there exists a constant $0<r'<1$ such that no $2^{o(n)}$-time algorithm can $r'$-approximate the MaxCut of an $n$-vertex, 5-regular graph. %
\end{theorem}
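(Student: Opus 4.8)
The plan is to derive the \ETH{}-based bound by composing a chain of gap-producing, size-preserving reductions, so that the number of vertices of the final $5$-regular graph stays linear (up to polylogarithmic factors) in the number of variables of the starting 3SAT instance; this linear bookkeeping is what lets the $2^{\Omega(n)}$ lower bound of \ETH{} survive all the way to MaxCut. First I would invoke the Sparsification Lemma of Impagliazzo, Paturi and Zane together with \ETH{} to conclude that there is a constant $\epsilon_0>0$ for which no $2^{o(n)}$-time algorithm decides satisfiability of 3SAT instances with $n$ variables and $m=O(n)$ clauses. The crucial consequence is that from this point on every reduction is only allowed to blow up the instance size by a constant, or at worst polylogarithmic, factor.

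Next I would pass from exact satisfiability to a constant gap. The natural tool is a PCP theorem with (near-)linear proof length: applying such a PCP to the sparse 3SAT instance produces in polynomial time a Max-CSP instance (say gap Max-E3-LIN-$2$, or a constraint graph for gap-3SAT) on $N=O(n)\cdot\mathrm{polylog}(n)$ variables, for which it is hard to distinguish the fully satisfiable case from the case where at most a $(1-\delta)$ fraction of constraints is simultaneously satisfiable, for some absolute constant $\delta>0$. Because the size stays quasi-linear, a $2^{o(N)}$ algorithm for this gap problem would yield a $2^{o(n)}$ algorithm for sparse 3SAT, contradicting the previous step.

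I would then convert the gap Max-CSP instance into a MaxCut instance using the standard constant-size gadget reductions (for instance the gadgets of Trevisan, Sorkin, Sudan and Williamson from Max-E3-LIN-$2$ and Max-3SAT to MaxCut), which incur only a constant multiplicative blowup in the number of vertices and map a constant satisfiability gap to a constant cut-value gap; this is exactly the gap behind the \NP-hardness recorded in Theorem~\ref{thm:maxcut_hard}. Finally I would enforce $5$-regularity by an expander-replacement degree-reduction step: replace each vertex of degree $d$ by a constant-degree expander on $d$ vertices, route the incident original edges to distinct expander vertices, and pad each vertex with a bounded number of extra edges to reach degree exactly $5$. A standard expander-mixing analysis shows this transformation is again linear in size and preserves a (smaller) constant gap $r'<1$ in the optimal cut fraction. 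Chaining the four reductions, a $2^{o(n)}$-time $r'$-approximation algorithm for MaxCut on $n$-vertex $5$-regular graphs would propagate backwards to a $2^{o(n)}$ decision procedure for sparse 3SAT, contradicting \ETH{}.

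The hard part will be the quantitative control of the instance size through the PCP step. The gadget and degree-reduction stages are genuinely linear-size and gap-preserving by well-understood gadget and expander arguments, so they pose no difficulty for the $2^{o(n)}$ accounting. The PCP stage, however, is where the blowup is born: the classical polynomial-size PCP would only give the weaker form $2^{o(n^{c})}$ for some $c<1$, and recovering the clean $2^{o(n)}$ statement forces one to use a quasi-linear (or linear) size PCP construction and to verify carefully that its polylogarithmic overhead is absorbable in the final lower bound. Establishing and tracking this near-linear size guarantee, while simultaneously keeping the gap a fixed constant, is the delicate core of the argument.
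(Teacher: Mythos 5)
The paper never proves this statement: Theorem~\ref{thm:maxcut_eth_hard} is imported as a black box from \cite{flp16} and is only \emph{used} (in the reduction of Theorem~\ref{thm:reg_lower_bound}), so there is no in-paper proof to compare yours against. Your roadmap --- sparsification, then a gap-producing PCP of near-linear size, then constant-size gadget reductions to \MAXCUT{}, then expander-based degree reduction to a $5$-regular graph --- is the standard architecture for results of this type, and you correctly locate the crux in the size of the PCP step.

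However, precisely at that crux your argument has a genuine gap: the polylogarithmic overhead of a quasi-linear PCP is \emph{not} ``absorbable.'' If the composed reduction maps a sparse {\SAT} instance on $n$ variables to a gap instance on $N = n\cdot\mathrm{polylog}(n)$ vertices, then a $T(N)$-time algorithm for the gap problem yields a $T(n\cdot\mathrm{polylog}(n))$-time algorithm for sparse {\SAT}; contradicting {\ETH} requires $T(n\cdot\mathrm{polylog}(n)) = 2^{o(n)}$, i.e.\ $T(N) = 2^{o(N/\mathrm{polylog}(N))}$. This rules out only $2^{o(N/\mathrm{polylog}(N))}$-time algorithms: an algorithm running in time $2^{N/\log N}$, say, is $2^{o(N)}$ yet survives your argument entirely. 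Obtaining the clean ``no $2^{o(n)}$-time algorithm'' conclusion along your route would require a truly linear-size, constant-query, constant-gap PCP, which is not known to exist (the constructions of Dinur and of Ben-Sasson--Sudan all carry at least polylogarithmic blowup, and removing it is a well-known open problem). So either the conclusion must be weakened to ``no $2^{n^{1-\delta}}$-time algorithm for every $\delta>0$'' --- the form in which such {\ETH}-based inapproximability results are usually stated, and which would still support a meaningful version of Theorem~\ref{thm:reg_lower_bound} --- or one needs an argument that bypasses the PCP bottleneck, which your sketch does not supply. The remaining stages (sparsification, gadgets in the style of Trevisan--Sorkin--Sudan--Williamson, expander replacement and padding to exact $5$-regularity) are genuinely linear-size and gap-preserving, as you say, and are not where the difficulty lies.
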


With Theorem~\ref{thm:maxcut_hard} and Theorem~\ref{thm:maxcut_eth_hard}, we can prove the following inapproximability result for the $\ell_2$-regression problem with hidden signs. 

\begin{theorem}[Lower bound of $\ell_2$-regression with hidden signs]\label{thm:reg_lower_bound}
There exists a constant $\epsilon>0$ such that it is {\NP}-hard to $(1+\epsilon)$-approximate
\begin{align}\label{eq:l2_reg}
    \min_{z\in \R^n} \| |W z| - y \|_2,
\end{align}
where $W \in \{0,1\}^{m \times n}$ is row 2-sparse and $y \in \{0,1\}^m$.

Furthermore, assuming {\ETH}, there exists a constant $\epsilon'$ such that no $2^{o(n)}$-time algorithm can $\epsilon'$-approximate Eq.~\eqref{eq:l2_reg}.
\end{theorem}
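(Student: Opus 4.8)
The plan is to reduce $3$-regular \MAXCUT{} (Theorem~\ref{thm:maxcut_hard}) to the $\ell_2$-regression of Eq.~\eqref{eq:l2_reg} in a gap-preserving manner, and to run the \emph{same} reduction from the $5$-regular instances of Theorem~\ref{thm:maxcut_eth_hard} to obtain the \ETH{}-based bound. Given a $3$-regular graph $G=(V,E)$ with $|V|=n$ and $|E|=m_0=3n/2$, I would build a regression instance whose $n$ variables $z\in\R^n$ are indexed by the vertices of $G$, interpreting $\mathrm{sign}(z_v)$ as the side of the cut containing $v$. The matrix $W$ and target $y$ have two kinds of rows. \textbf{Edge rows:} for each $(u,v)\in E$, a $2$-sparse row with ones in coordinates $u,v$ and target $y=0$; since $|z_u+z_v|$ is small exactly when $z_u,z_v$ have opposite signs and comparable magnitude, these rows reward cut edges. \textbf{Anchor rows:} for each $v\in V$, take $R=\poly(n)$ copies of the $1$-sparse row supported on $v$ with target $y=1$ (this is why ``row $2$-sparse'' is stated as an upper bound on sparsity); these force $|z_v|\approx 1$ at any near-optimal solution and, crucially, rule out the trivial collapse $z=0$ that $y=0$ edge rows alone would permit.

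The core of the argument is a gap-preservation (rounding) lemma relating $\OPT_{\mathrm{reg}}$ to $\mathrm{MaxCut}(G)$. For the easy direction, plugging in the integral assignment $z_v=\pm 1$ induced by a maximum cut makes every anchor row cost exactly $0$, every cut edge row cost exactly $0$, and every uncut edge row cost exactly $(|{\pm 2}|-0)^2=4$; hence $\OPT_{\mathrm{reg}}\le 4(m_0-\mathrm{MaxCut}(G))$. For the converse I would take any near-optimal real vector $z^\star$, write $z_v=\pm(1+\delta_v)$, and use the anchor term $R\sum_v\delta_v^2$, made dominant by choosing $R$ a sufficiently large polynomial, to conclude that every $|\delta_v|$ is tiny. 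A short calculation then shows each edge row costs an arbitrarily small constant when its two endpoints receive opposite signs and $4$ minus an arbitrarily small constant otherwise, so that $\OPT_{\mathrm{reg}}\ge 4(m_0-\mathrm{MaxCut}(G))-\gamma$ and the sign pattern of $z^\star$ recovers a cut of value at least $\mathrm{MaxCut}(G)-\gamma'$, where $\gamma,\gamma'$ can be driven below any fixed constant by increasing $R$.

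Finally I would transfer the inapproximability gap. Because the hard $3$-regular instances are non-bipartite with $\mathrm{MaxCut}(G)$ bounded away from $m_0$ by a constant fraction, the multiplicative factor $r+\epsilon$ (with $r\approx 0.997$) in the \MAXCUT{} objective becomes a constant \emph{multiplicative} gap in the complementary quantity $m_0-\mathrm{MaxCut}(G)$, hence in $\OPT_{\mathrm{reg}}$; choosing $R$ large enough that the rounding corrections $\gamma,\gamma'$ are dominated by this gap yields a constant $\epsilon>0$ for which $(1+\epsilon)$-approximating Eq.~\eqref{eq:l2_reg} is \NP-hard. Running the identical construction on the $5$-regular instances of Theorem~\ref{thm:maxcut_eth_hard}, and observing that the number of regression variables equals $|V|$ (the $\poly(n)$ anchor copies inflate only the number of rows, not the number of columns), a $2^{o(n)}$-time $\epsilon'$-approximation of Eq.~\eqref{eq:l2_reg} would give a $2^{o(|V|)}$-time approximation of $5$-regular \MAXCUT{}, contradicting \ETH{}.

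I expect the main obstacle to be the gap-preservation lemma: controlling the continuous relaxation \emph{together with} the hidden-sign freedom so that no real-valued $z$ can beat the best integral cut by more than the anchor-controlled error. Concretely, the delicate point is calibrating the replication factor $R$ simultaneously against the rounding error (which shrinks as $R$ grows) and against the fixed \MAXCUT{} gap, while verifying that near-optimal $z^\star$ genuinely clusters into two antipodal groups rather than exploiting intermediate magnitudes to cheat on the $y=0$ edge rows.
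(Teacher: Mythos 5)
Your reduction is essentially identical to the paper's: the same $2$-sparse edge rows with target $0$, the same replicated $1$-sparse anchor rows with target $1$, the same completeness value $4(m_0-\mathrm{MaxCut}(G))$, soundness via sign-rounding with the error controlled by the anchor term, and the same reuse of the construction on $5$-regular instances for the \ETH{} bound (noting the variable count stays $|V|$). The only immaterial difference is that you replicate each anchor row $R=\poly(n)$ times where the paper uses a fixed constant $c=10^6$, which already suffices for its per-vertex bound $12\Delta_i-c\Delta_i^2\le 72/c$.
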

\begin{proof}

Given a 3-regular {\MAXCUT} instance $G$, we construct an $\ell_2$-regression instance $(W, y)$ with $W\in \{0,1\}^{m'\times n}$ and $y\in \{0,1\}^{m'}$ where $m'=m+cn=(1+3c/2)m$ and $c=10^6$ as follows. 
\begin{itemize}
    \item For each $i\in [m]$, let the $i$-th edge of $G$ be $e_i=\{u,v\}$. We set $W_{i,*}$ to be all zeros except the $u$-th and $v$-th coordinates being one. That is, we add a constraint $|z_u+z_v|$. And we set $y_i = 0$.
    \item For each $j\in [n]$, we set $W_{m+c(j-1)+1,*},\dots, W_{m+cj, *}$ to be all zero vectors except the $j$-th entry being one. That is, we add $c$ constraints of the form $|z_j|$. And $y_{m+c(j-1)+1}=\cdots = y_{m+cj}=1$. 
\end{itemize}

\paragraph{Completeness.}
Let $\OPT$ be the optimal value of max-cut of $G$ and let $S_{\OPT}$ be the optimal subset. Then, for each $u\in S_{\OPT}$, we set $z_u=1$; and for $u\notin S_{\OPT}$, we set $z_u=-1$. For the first type constraints $|z_u+z_v|$, if $u$ and $v$ are cut by $S_{\OPT}$, then $|z_u+z_v|=0$; otherwise $|z_u+z_v|=2$. For the second type constraints $|z_j|$, all of them are satisfied by our assignment. Thus, $\|Wz-y\|_2^2 =4(m-\OPT)$. 

\paragraph{Soundness.}
Let $\eta$ be a constant such that $r<\eta < 1$,  where $r$ is the approximation lower bound in Theorem~\ref{thm:maxcut_hard}. Let $\delta=\frac{1-\eta}{10c}$.
We will show that, if there exits a $z$ such that $\|Wz-y\|_2^2 \leq \delta m'$, then we can recover a subset $S$ with cut-size $\eta m$. 

It is easy to see that the optimal solution lies in $[-1,1]^n$. Since for $z\notin [-1,1]^n$, we can always transform it to a new vector $z'\in [-1,1]^n$ such that $\|Wz'-y\|_2\leq \|Wz-y\|_2$. 

Suppose $z\in \{-1,1\}^n$ is a Boolean vector. Then, we can pick $S=\{i\in [n]: z_i=1\}$. We have the cut-size of $S$ is
\begin{align*}
    |E(S,V\backslash S)|\geq  &~ m - \delta m'/4\\
    = &~ m - \delta (1+3c/2)m/4\\
    = &~ (1-\delta /4-3c\delta/8)m\\
    \geq &~ \eta m,
\end{align*}
where the last step follows from $\delta \leq \frac{8(1-\eta)}{2+6c}$.

For general $z\in [-1,1]^n$, we first round $z$ by its sign: let $\bar{z}_i=\mathrm{sign}(z_i)$ for $i\in [n]$. We will show that
\begin{align*}
    \|W\bar{z}-y\|_2^2 - \|Wz-y\|_2^2 \leq \frac{48}{c}m
\end{align*}
which implies
\begin{align*}
    \|W\bar{z}-y\|_2^2 = &~ \|Wz-y\|_2^2 + (\|W\bar{z}-y\|_2^2 - \|Wz-y\|_2^2)\\
    \leq &~ \delta m' + \frac{48}{c}m.
\end{align*}
Then, we have the cut-size of $S$ is 
\begin{align*}
    |E(S,V\backslash S)| \geq &~ m - (\delta m' - 48m/c)/4\\
    = &~ (1-\delta/4 - 3c\delta /8 - 12/c) m \\
    \geq &~ \eta m,
\end{align*}
where the last step follows from $\delta \leq \frac{8(1-\eta-12/c)}{2+6c}$.

Let $\Delta_i := |\bar{z}_i - z_i| = 1-|z_i| \in [0,1]$. We have
\begin{align*}
    \|W\bar{z}-y\|_2^2 - \|Wz-y\|_2^2 = &~ \sum_{i=1}^{m} (\bar{z}_{u_i}+\bar{z}_{v_i})^2 - (z_{u_i}+z_{v_i})^2 + c\cdot \sum_{j=1}^n (|\bar{z}_j|-1)^2 - (|z_j|-1)^2\\
    = &~ \sum_{i=1}^{m} (\bar{z}_{u_i}+\bar{z}_{v_i})^2 - (z_{u_i}+z_{v_i})^2 - c\cdot\sum_{j=1}^n  (|z_j|-1)^2\\
    = &~ \sum_{i=1}^{m} (\bar{z}_{u_i}+\bar{z}_{v_i})^2 - (z_{u_i}+z_{v_i})^2-c\cdot \sum_{j=1}^n \Delta_j^2\\
    \leq &~ \sum_{i=1}^m 4|\Delta_{u_i} + \Delta_{u_j}|-c\cdot \sum_{j=1}^n \Delta_j^2\\
    = &~ \sum_{i=1}^n 12\Delta_i - c\Delta_i^2\\
    \leq &~ \frac{72}{c} n\\
    = &~ \frac{48}{c}m,
\end{align*}
where the first step follows by the construction of $W$ and $y$. The second step follows from $|\bar{z}_j|=1$ for all $j\in [n]$. The third step follows from the definition of $\Delta_j$. The forth step follows from $|z_{u_i}+z_{u_j}|\in [0,2]$. The fifth step follows from the degree of the graph is 3. The fifth step follows from the minimum of the quadratic function $12x-cx^2$ in $[0,1]$ is $\frac{72}{c}$. The last step follows from $m=3n/2$.

Therefore, by the completeness and soundness of reduction, if we take $\epsilon\in (0,\delta)$, Theorem~\ref{thm:maxcut_hard} implies that it is {\NP}-hard to $(1+\epsilon)$-approximate the $\ell_2$-regression, which completes the proof of the first part of the theorem.

For the furthermore part, we can use the same reduction for a 5-regular graph. By choosing proper parameters ($c$ and $\delta$), we can use Theorem~\ref{thm:maxcut_eth_hard} to rule out $2^{o(n)}$-time algorithm for $O(1)$-factor approximation. We omit the details since they are almost the same as the first part.
\end{proof}

\end{document}